\pgfplotsset{compat=1.18}
\newtheorem{theorem}[subsection]{Theorem}
\definecolor{darkgreen}{rgb}{0.31, 0.47, 0.26}
\definecolor{matblue}{HTML}{1F77B4}
\renewcommand{\hat}{\widehat}
\newcommand{\Itrain}{\mathcal{I}_{\textrm{train}}}
\newcommand{\Itest}{\mathcal{I}_{\textrm{test}}}
\newcommand{\Ical}{\mathcal I_{\text{cal}}}
\newcommand{\Ieval}{\mathcal I_{\text{eval}}}
\g@addto@macro{\endabstract}{\@setabstract}
\newcommand{\authorfootnotes}{\renewcommand\thefootnote{\@fnsymbol\c@footnote}}
\title[conformal prediction under adversarial attack]{Game-Theoretic Defenses for Robust Conformal Prediction Against Adversarial Attacks in Medical Imaging}
\date{}
\begin{document}
\maketitle
\begin{center}
\normalsize
\authorfootnotes
Rui Luo\footnote{ruiluo@cityu.edu.hk}\textsuperscript{1}, Jie Bao\footnote{1486103897@qq.com}\textsuperscript{2}, Zhixin Zhou\footnote{zhixin0825@gmail.com}\textsuperscript{3},
Chuangyin Dang\footnote{mecdang@cityu.edu.hk}\textsuperscript{1} \par \bigskip

\textsuperscript{1}City University of Hong Kong \par
\textsuperscript{2}Huaiyin Institute of Technology \par
\textsuperscript{3}Alpha Benito Research \par
\end{center}

\begin{abstract}
Adversarial attacks pose significant threats to the robustness of deep learning models, especially in critical domains such as medical imaging. This paper introduces a framework that integrates conformal prediction with game-theoretic defensive strategies to counter both known and partially unknown adversarial perturbations. We address three research questions: constructing valid and efficient conformal prediction sets under known attacks (RQ1), ensuring coverage under unknown attacks through conservative thresholding (RQ2), and determining optimal defensive strategies within a zero-sum game framework (RQ3). In our methodology, specialized defensive models are trained against specific attack types and aggregated via maximum and minimum classifiers. Extensive experiments on the MedMNIST datasets—PathMNIST, OrganAMNIST, and TissueMNIST—show that our approach preserves high empirical coverage while minimizing prediction set sizes. The game-theoretic analysis indicates that the optimal defense strategy often converges to a singular robust model, outperforming simple uniform approaches. This work advances uncertainty quantification and adversarial robustness, providing a reliable mechanism for deploying deep learning models in adversarial environments.

\end{abstract}
\section{Introduction}

Adversarial attacks \cite{kumari2019harnessing} significantly undermine the robustness of deep learning models, particularly in high-stakes domains such as medical diagnostics \cite{ma2021understanding} and autonomous driving \cite{badjie2024adversarial}. These attacks introduce subtle perturbations to input data that can lead to incorrect and potentially harmful predictions. In critical applications like healthcare and autonomous systems, ensuring both adversarial robustness uncertainty quantification is paramount. Conformal Prediction (CP) \cite{katsios2024multi, meister2023novel, balasubramanian2014conformal} offers a robust framework for uncertainty quantification by generating prediction sets that contain the true label at a predefined confidence level \cite{Vovk2005AlgorithmicLI}.

The interplay between adversarial robustness and conformal prediction \cite{angelopoulos2022conformal, einbinder2022conformal} has received growing attention; however, its implications on the efficiency and robustness of CP sets remain underexplored. Previous studies have focused on adapting scoring functions when either test or calibration data are attacked, but these approaches often suffer from validity issues, undermining the robustness of CP sets under adversarial conditions.

In this work, we propose a novel integration of game-theoretic principles with conformal prediction to construct robust CP sets. Although our empirical evaluation is demonstrated using medical imaging datasets, the underlying framework is applicable to a wide range of domains. Our methodology specifically addresses the following research questions:

\begin{itemize}
    \item How can we construct valid and efficient CP sets under known adversarial attacks?
    \item How can we maintain coverage guarantees in the presence of a set of potential adversarial perturbations?
    \item What are the optimal defensive strategies in a zero-sum game setting where an adversary selects the most potent attack?
\end{itemize}

To tackle these questions, we develop specialized models by partitioning the training data and tailoring them to distinct attack types. We then construct conservative prediction sets based on the highest estimated quantile from calibration data across different adversarial scenarios. This conservative strategy ensures that the CP sets maintain their validity regardless of the attack. Moreover, by formulating the interplay between the defender and the adversary as a zero-sum game, we rigorously analyze the worst-case scenario—namely, the the most adverse adversarial perturbation that maximally impacts prediction set size an adversary can select—and derive optimal mixed defense strategies that minimize prediction set ambiguity while preserving coverage.

By integrating game-theoretic reasoning with CP, our approach enhances the robustness of deep learning systems in adversarial environments. In addition to maintaining high coverage rates, it minimizes prediction set sizes, thereby improving the safety and decision-making quality in critical applications. While our experiments focus on medical imaging, the proposed framework is not limited to this domain and can be extended to other applications where robustness to adversarial perturbations is crucial, such as unmanned driving, financial security analysis and network security protection.

\section{Related Work}
\subsection{Adversarial Attack}
Poisoning attacks represent one of the most direct threats to the training process of machine learning models \cite{tian2022comprehensive}. Current defenses against data poisoning attacks mainly fall into two categories. The first category focuses on anomaly detection using techniques such as nearest neighbors \cite{peri2020deep}, training loss \cite{cretu2008casting}, singular value decomposition \cite{tran2018spectral,diakonikolas2019sever}, clustering \cite{chen2018detecting}, taxonomy \cite{barreno2010security}, or logistic regression \cite{feng2014robust}, as well as other related methods \cite{steinhardt2017certified,paudice2018detection} to filter out anomalies through data-driven approaches. While these countermeasures can mitigate the impact of poisoning to some extent, they exhibit clear shortcomings in terms of effectiveness, cost, and accuracy. The second category focuses on model-driven strategies aimed at enhancing model robustness through techniques such as randomized smoothing \cite{weber2023rab}, ensemble methods \cite{levine2020deep}, data augmentation \cite{ma2019data}, and adversarial training \cite{tao2021better}, among others \cite{liu2017robust}.  
   
However, poisoning attacks are more difficult to detect in deep learning models \cite{barni2019new, saha2020hidden}. Consequently, a wider range of methods has been developed to enhance model robustness against such attacks. For instance, improvements have been made by modifying the model architecture \cite{goldberger2017training}, designing robust loss functions \cite{ghosh2017robust, xu2019l_dmi}, and optimizing loss functions \cite{hendrycks2018using}.

\subsection{Conformal Prediction under Adversarial Attack}
Uncertainty estimation is crucial for the robustness of deep learning models. Conformal Prediction (CP), introduced by Vovk et al. \cite{Vovk2005AlgorithmicLI}, provides distribution-free coverage guarantees but faces challenges when confronted with data poisoning and adversarial attacks. Gendler et al. \cite{gendler2021adversarially} proposed RSCP, enhancing CP's robustness through randomized smoothing by replacing the scoring function to defend against $\ell_2$-norm adversarial perturbations, though its formal guarantees remain limited. Yan et al. \cite{yan2024provably} introduced RSCP+, which adds a hyperparameter $\beta$ and Hoeffding bounds to correct Monte Carlo errors, while incorporating RCT and PTT techniques to improve efficiency. However, these modifications often result in overly conservative prediction sets and increased reliance on the calibration set.
   
In contrast, Ghosh et al. \cite{ghosh2023probabilistically} proposed PRCP, which approaches adversarial perturbations from a probabilistic standpoint by relaxing robustness guarantees under a predefined distribution. Cauchois et al. \cite{cauchois2024robust} addressed distributional shifts using an f-divergence constraint, though the resulting optimization is overly conservative \cite{dvijotham2020framework}. Angelopoulos et al. \cite{angelopoulos2022conformal} extended CP to control expected loss but did not provide algorithms for computing worst-case adversarial loss. Einbinder et al. \cite{einbinder2022conformal} demonstrated that standard CP is robust to random label noise, though adversarial label perturbations were not explored. Furthermore, traditional defenses based on anomaly detection, loss function adjustments, and data clustering \cite{peri2020deep,cretu2008casting,tran2018spectral,chen2018detecting,steinhardt2017certified} struggle to safeguard CP’s unique properties, especially against sophisticated poisoning attacks targeting uncertainty mechanisms. Additionally, Ennadir et al. \cite{ennadir2023conformalized}   investigated the application of CP in adversarial attacks on graph neural networks.
   
Building on these limitations, Liu et al. \cite{liu2024pitfalls} proposed an uncertainty reduction-based adversarial training method (AT-UR), which combines Beta-weighted loss and entropy minimization to enable models to maintain robustness while generating larger and more meaningful prediction sets. Jeary et al. \cite{jeary2024verifiably} introduced Verifiably Robust Conformal Prediction (VRCP), which supports perturbations bounded by various norms (including $\ell_1$, $\ell_2$, and $\ell_\infty$) and regression tasks, using recent neural network verification techniques to restore coverage guarantees under adversarial attacks.

On a different front, two distinct types of attacks have been explored in the context of Conformal Prediction (CP): \emph{evasion attacks} and \emph{contamination (or poisoning) attacks}. In evasion attacks, the adversary perturbs test inputs (often under norm constraints) to induce uncertainty or misclassification during inference. For instance, Zargarbashi et al. \cite{zargarbashi2024robust} proposed using the cumulative distribution function (CDF) of smoothed scores to derive tighter upper bounds on worst-case score variations. Their method addresses both continuous and discrete/sparse data by providing guarantees for evasion attacks (which perturb test inputs) as well as for contamination attacks (which perturb calibration data). In contrast, Li et al. \cite{li2024data} developed a new class of black-box data poisoning attacks that specifically target CP by manipulating the prediction uncertainty rather than directly causing misclassifications. They also proposed an optimization framework to defend against these poisoning attacks. Furthermore, Scholten et al. \cite{scholten2024provably} introduced Reliable Prediction Sets (RPS), which aggregate predictions from classifiers trained on different partitions of the training data and calibrated on disjoint subsets, thereby enhancing CP's resilience to contamination attacks.
   
Recent studies have also highlighted CP's resilience to label noise. Penso and Goldberger \cite{penso2024noise} introduced Noise-Robust Conformal Prediction (NR-CP), which estimates noise-free scores from noisy data, constructing smaller and more efficient prediction sets while maintaining coverage guarantees. NR-CP significantly outperforms other methods in noisy environments. Another study \cite{einbinder2022conformal} explored the inherent robustness of CP, showing that even with noisy labels, CP tends to conservatively maintain coverage, although some adversarial noise conditions may compromise its robustness. Both studies emphasize CP’s resilience to label noise, a perturbation type that, while not intentionally malicious, introduces variability similar to adversarial attacks.

\paragraph{Our work's connection with existing works}
Building upon the foundations laid by prior research, such as RSCP \cite{gendler2021adversarially}, PRCP \cite{ghosh2023probabilistically}, and VRCP \cite{jeary2024verifiably}, our work advances the robustness of CP systems in adversarial settings. While previous studies have focused on specific attack vectors targeting either test data \cite{li2024data} or calibration data \cite{zargarbashi2024robust}, our approach considers a comprehensive adversarial model where the attacker can manipulate both aspects simultaneously without the defender's knowledge. By constructing a conservative prediction set based on the maximum estimated quantile from attacked calibration data, we ensure validity under a wider range of attack scenarios, addressing limitations noted in works such as RSCP+ \cite{yan2024provably} and PRCP \cite{ghosh2023probabilistically}, which may be overly conservative or insufficiently robust against certain attacks.

Furthermore, by framing the interaction between attacker and defender as a zero-sum game, akin to methodologies in adversarial training \cite{tao2021better} and robust optimization \cite{patrini2017making}, our work not only identifies optimal attack strategies but also devises corresponding defense mechanisms that preserve CP system integrity. This game-theoretic perspective extends the robustness guarantees provided by prior works \cite{angelopoulos2022conformal,scholten2024provably}, offering a unified and strategic framework to enhance CP resilience against diverse and adaptive adversarial threats. Additionally, our approach synergizes with uncertainty reduction techniques \cite{liu2024pitfalls} and efficient bound derivations \cite{zargarbashi2024robust}, integrating these methodologies into a cohesive strategy that fortifies CP systems beyond the state-of-the-art methods.

In summary, our research complements and extends existing defense strategies by integrating game theory to model adversarial interactions and proposing novel mechanisms to fortify CP systems. This approach not only builds on the strengths of previous studies \cite{ghosh2023probabilistically,jeary2024verifiably,li2024data,zargarbashi2024robust} but also addresses their limitations, providing enhanced robustness for CP in high-stakes, adversarial environments.

\section{Preliminary and Problem Setup}

In this section, we establish the foundational notation and problem setup for evaluating and enhancing the robustness of CP systems against adversarial attacks. By defining the adversarial attack strategies, defensive models, conformal prediction framework, score functions, and metrics, and by formulating key research questions, we set the stage for developing and analyzing robust prediction mechanisms that maintain valid uncertainty estimates in the presence of deliberate perturbations. The notation used throughout the paper is given in Table \ref{tab:notation}.

\begin{table}[h]
    \centering
    \resizebox{0.8\textwidth}{!}{
    \begin{tabularx}{\textwidth}{@{\extracolsep{\fill}} l X}
        \hline
        \textbf{Notation} & \textbf{Description} \\
        \hline
        $\{(x_i, y_i)\}_{i\in \mathcal{I}}$ & Images and labels where $i$ belongs to the index set $\mathcal{I}$. \\
        $\Itrain, \Ical, \Ieval, \Itest$ & The index set for training, calibration, evaluation, and testing. \\
        $f_0$ & The normal (pre-trained) model trained on clean data. \\
        $g_j$, $j = 1, \ldots, m$ & Adversarial attack functions, where $g_j$ maps the original inputs and labels to perturbed inputs, i.e., $g_j: (X, Y, f_0) \rightarrow X'$. \\
        $f_j$, $j = 1, \ldots, m$ & Defensive models trained to defend specific attacks, where $f_j(x_i)$ denotes the defensive model $f_j$'s prediction on input $x_i$. \\
        $\epsilon$ & Maximum allowable perturbation for adversarial attacks. \\
        $\mathcal{L}(f(x_i), y_i)$ & Loss function of model $f$ with respect to input $x_i$ and true label $y_i$. \\
        $\Gamma(x_i)$ & Conformal prediction set for input $x_i$. \\
        $q_{1-\alpha}$, $\alpha \in (0, 1)$ & Quantile threshold for the desired coverage $1 - \alpha$. \\
        $\mathcal{Y}$ & Set of all possible labels. \\
        \hline
    \end{tabularx}}
    \caption{Notation Used Throughout the Paper}
    \label{tab:notation}
\end{table}

\subsection{Adversarial Attacks and Defensive Models}

\subsubsection{Adversarial Attacks}
Adversarial attacks are strategies employed to intentionally perturb input data in a way that deceives machine learning models into making incorrect predictions. We consider the following adversarial attacks:

\paragraph{Fast Gradient Sign Method (FGSM)}
FGSM is one of the earliest and most straightforward adversarial attack techniques introduced by Goodfellow et al. (2014) \cite{goodfellow2014explaining}. It generates adversarial examples by perturbing the input data in the direction of the gradient of the loss function with respect to the input. The perturbation magnitude is controlled by a small factor $\epsilon$, ensuring that the modifications remain imperceptible to human observers. Mathematically, the adversarial example $x'_i$ is computed as:
\[
    x'_i = x_i + \epsilon \cdot \text{sign}\left(\nabla_{x_i} \mathcal{L}(f_0(x_i), y_i)\right)
\]
Here, $x_i$ is the original input sample, and $\nabla_{x_i} \mathcal{L}(f_0(x_i), y_i)$ represents the gradient of the loss with respect to the input $x_i$.

FGSM is computationally efficient due to its single-step nature, making it suitable for rapid adversarial example generation. However, its simplicity can limit its effectiveness against models trained with robust defenses.

\paragraph{Projected Gradient Descent (PGD)}
PGD is an iterative extension of FGSM that applies multiple small perturbations, projecting the adversarial example back into the allowed perturbation space after each iteration. Introduced by Madry et al. (2017) \cite{madry2017towards}, PGD enhances the effectiveness of the attack by allowing more precise adjustments to the input data through multiple refinement steps. The adversarial example at iteration $t+1$, denoted as $x'^{(t+1)}_i$, is computed as:
\[
    x'^{(t+1)}_i = \Pi_{\mathcal{B}(x_i, \epsilon)} \left\{ x'^{(t)}_i + \alpha \cdot \text{sign}\left(\nabla_{x'^{(t)}_i} \mathcal{L}(f_0(x'^{(t)}_i), y_i)\right) \right\}
\]
In this equation, $\alpha$ is the step size for each perturbation, and $\Pi_{\mathcal{B}(x_i, \epsilon)}$ is the projection operator that ensures $x'^{(t+1)}_i$ remains within the $\epsilon$-ball around the original input $x_i$.

PGD performs adversarial perturbations in multiple iterations, each time adjusting the input based on the gradient of the loss function. This iterative process leads to stronger adversarial examples that are more likely to deceive robust models, making PGD a standard benchmark for evaluating model robustness.

\paragraph{Simultaneous Perturbation Stochastic Approximation (SPSA)}
SPSA \cite{uesato2018adversarial} is a gradient-free adversarial attack that estimates the gradient of the loss function using random perturbations, making it suitable for black-box scenarios where the attacker does not have access to the model's internal parameters. Unlike FGSM and PGD, which rely on exact gradient information, SPSA approximates the gradient by simultaneously perturbing multiple dimensions of the input data with small random noise.

Mathematically, the SPSA attack updates the adversarial example $x'_i$ as follows:
\[
    \hat{\nabla} \mathcal{L}(f_0(x'_i), y_i) \approx \frac{\mathcal{L}(f_0(x'_i + \Delta_i), y_i) - \mathcal{L}(f_0(x'_i - \Delta_i), y_i)}{2\delta} \Delta_i,
\]
\[
    {x'_i}^{(t+1)} = \Pi_{\mathcal{B}(x_i, \epsilon)} \left\{ {x'_i}^{(t)} + \alpha \cdot \text{sign}\left(\hat{\nabla} \mathcal{L}(f_0({x'_i}^{(t)}), y_i)\right) \right\}
\]
Here, $\Delta_i$ is a small random perturbation vector, $\delta$ is a smoothing parameter, and $\alpha$ is the step size. The projection operator $\Pi_{\mathcal{B}(x_i, \epsilon)}$ ensures that the adversarial example remains within the permissible perturbation space.

SPSA is particularly effective in black-box settings where gradient information is not readily available, as it requires only the evaluation of the loss function at perturbed inputs to approximate the gradient. This makes SPSA a versatile and powerful attack method against models with limited attack surface information.

\paragraph{Carlini $\&$ Wagner (CW) Attack}
The CW attack \cite{carlini2017towards} is an optimization-based method for generating minimally perturbed adversarial examples. For the $L_2$ norm, the attack solves
\[
\min_{\delta}\;\|\delta\|_2^2 + c \cdot f(x_{i}+\delta),
\]
subject to
\[
x_{i}+\delta \in [0,1]^n,
\]
where $c>0$ balances the perturbation size with inducing misclassification, and $f$ is a loss function that encourages the attack. A typical definition for a targeted attack is
\[
f(x_{i}') = \max\Big\{ \max_{j \neq t}\{Z(x_{i}')_j\} - Z(x_{i}')_t, -\kappa \Big\},
\]
with $t$ as the target class, $Z(x_{i}')$ representing the model logits, and $\kappa$ controlling the confidence level. In practice, a change-of-variable such as
\[
x_{i}' = \frac{1}{2}\Big(\tanh(w) + 1\Big)
\]
is often adopted to enforce the input box constraint, where $w$ is the unconstrained variable optimized during the attack.

The CW attack is renowned for its effectiveness in producing adversarial examples that remain minimally perturbed while successfully causing misclassification, even on models that have been adversarially trained. Its optimization-based framework often makes it more challenging to defend against compared to simpler gradient-based methods.

\paragraph{Summary and Taxonomy of Attacks}
The aforementioned adversarial attacks can be broadly categorized into several groups:
\begin{itemize}
    \item \textbf{Gradient-based attacks:} Including FGSM and PGD, these methods rely on the gradient information of the loss function. FGSM provides a one-step approximation, while PGD adopts an iterative procedure for improved attack strength.
    \item \textbf{Gradient-free attacks:} Exemplified by SPSA, these approaches estimate gradients through random perturbations, making them suitable for scenarios where the model’s internal parameters are inaccessible.
    \item \textbf{Optimization-based attacks:} Such as the CW attack, which formulates the attack as an optimization problem to achieve minimally perturbed yet effective adversarial examples.
\end{itemize}
In practice, an adversary may even consider using clean (i.e., unperturbed) data or selectively choose one of these attack methods depending on the defensive measures in place, thereby providing a comprehensive evaluation of model robustness against various adversarial perturbations. This selection process, where the adversary chooses an attack method that maximizes its impact on the prediction outcome (i.e., enlarging the prediction set) against a specific defense model, forms a core component of our game-theoretic analysis. Here, the attacker's ability to choose among diverse attack strategies is explicitly modeled, reflecting the adaptive nature of adversarial threats.
\subsubsection{Defensive Models}
Defensive models are pre-trained neural network models designed to withstand specific adversarial attacks. In this framework, we consider:
\paragraph{Normal Model ($f_0$)} A standard model trained solely on clean (non-adversarial) data.
\paragraph{Attack-Specific Models ($f_j$)}: Separate models, each adversarially trained against a specific type of attack $g_j$ (e.g., FGSM, PGD, SPSA).

Each defensive model $f_j$ has been pre-trained and saved prior to experimentation. These models serve as the foundation for constructing a robust prediction mechanism through weighted combinations.

\subsection{Conformal Prediction Framework}

\textbf{Conformal Prediction (CP)} is a statistical framework that provides valid measures of uncertainty for machine learning predictions in a distribution-free manner. We start by assuming that a classification algorithm provides $\hat{p}_y(x)$, which approximates $P(Y=y|X=x)$. While our method and theoretical analysis do not depend on the accuracy of this approximation, it is beneficial to assume that higher values of $\hat{p}_y(x)$ indicate a greater likelihood of sample with feature $x$ having label $y$. We conduct the training procedure on a set $\Itrain$, which is separate from the calibration and test processes. Details will be provided in Section \ref{sec:result}. This separation ensures that $\hat{p}_y(x)$ remains independent of the conformal prediction procedure discussed in this paper.

Let us first consider a single \textit{non-conformity score function} $s(x, y)$. This function is defined such that smaller values of $s(x, y)$ indicate a higher priority of believing $x$ has label $y$. A common choice for the non-conformity score is $s(x, y) = -\hat{p}_y(x)$~\cite{Sadinle_2018}. In this context, conformal prediction for classification problems can be described by the following algorithm. To summarize, we first find a threshold in a set of labeled data, so that $s(x_i, y_i) \leq q_{1-\alpha}$ holds for at least $1-\alpha$ proportion in the set $\Itrain$. Then we use this threshold to define the prediction set for any $x_i$ in the test set, which is the lower level set of the function $y \mapsto s(x_i, y)$.

\begin{algorithm}[t]
\caption{Split Conformal Prediction}
\begin{algorithmic}[1]
\Require Data $\{(x_i, y_i)\}_{i\in\mathcal{I}}$, $\{x_i\}_{i\in\Itest}$, pre-determined coverage probability $1 - \alpha$
\Ensure A prediction set $\Gamma(x_i)$ for each $i \in \Itest$
\State Randomly split $\mathcal{I}$ into $\Itrain$ and $\Ical$.
\State Train a model $\hat{p}_y(x)$ on $\{(x_i, y_i)\}_{i\in \Itrain}$.
\State $q_{1-\alpha} \gets$ the $\lceil (1+|\Ical|)(1-\alpha) \rceil$-th smallest non-conformity score $s(x_i,y_i)$ for $i\in\Ical$. \label{step:two:simple}%
\State For each $x_i \in \Itest$, set $\Gamma(x_i) \gets \{ y \in \mathcal{Y} \mid s(x_i, y) \leq q_{1-\alpha} \}$. \label{step:three:simple}
\end{algorithmic}\label{alg:split}
\end{algorithm}
Algorithm \ref{alg:split} constructs prediction sets $\Gamma(x_i)$ for each $i\in \Itest$, based on the non-conformity scores and a threshold determined by the desired coverage probability $1-\alpha$. The algorithm splits the training data into two subsets: $\Itrain$ for calculating the non-conformity scores, and $\Ical$ for calibration. The threshold $q_{1-\alpha}$ is chosen to ensure the desired coverage probability.

A key property of this method is that, under the assumption of exchangeability of the data points (a weaker assumption than i.i.d.), it guarantees that the resulting prediction sets will contain the true label with probability at least $1-\alpha$:
\begin{align}\label{eq:coverage}
    \mathbb{P}\left(y_i \in \Gamma(x_i)\right) \geq 1 - \alpha.
\end{align}
This property holds regardless of the accuracy of the underlying classification algorithm, making conformal prediction a powerful tool for uncertainty quantification.

\subsection{Problem Formulation}
We aim to address the following research questions to construct a robust conformal prediction system under adversarial attack scenarios:

\begin{enumerate}
    \item \textbf{RQ1}: Given that a known attack $g_j$ is applied to the test dataset $\{x_i\}_{i\in \Itest}$, how can we use a defensive model $f_j$ to construct a valid and efficient conformal prediction set $\Gamma(x_i)$?
    
    \item \textbf{RQ2}: Given that an unknown adversarial attack is applied to the test dataset $\{x_i\}_{i\in \Itest}$, how can we use a defensive model to construct a valid and efficient conformal prediction set $\Gamma(x_i)$?
    
    \item \textbf{RQ3}: Given that an unknown and potentially adversarial attack is applied to the test dataset $\{x_i\}_{i\in \Itest}$, how can we determine the optimal defensive strategy for the defender under a game-theoretic setting where the defender and adversarial attacker comprise a zero-sum game?
\end{enumerate}

To formalize these questions, consider the following setup. Let \( f_0 \) be the model trained on a clean training dataset \(\{(x_i, y_i)\}_{i \in \Itrain}\). An adversary leverages the available training labels to build an adversarial model and then applies an attack function \( g_j \) on the test inputs \(\{x_i\}_{i \in \Itest}\) (with the desired outcome known to the adversary) to generate perturbed inputs:
$x'_i = g_j(x_i, y_i, f_0).$ Note that while the notation includes \( y_i \) for clarity in describing the mechanism, it is not necessary for the adversary to have access to the true labels of the test dataset. The objects from the test dataset, in conjunction with the adversary’s target outcome, are sufficient for carrying out the attack. Defensive models \(\{f_j\}\) are then trained on the modified training set:
$\{(x'_i, y_i)\}_{i\in \Itrain}$, so that each \( f_j \) is tailored to resist the corresponding attack \( g_j \).

The conformal predictor aims to construct prediction sets $\Gamma(x'_i)$ for $i\in \Itest$ that satisfy the coverage guarantee (\ref{eq:coverage}):
\[
\mathbb{P}\left(y_i \in \Gamma(x'_i)\right) \geq 1 - \alpha,
\]
under the adversarial perturbations introduced by $g_j$.

Addressing these research questions is essential for enhancing the robustness of conformal prediction in adversarial settings. RQ1 and RQ2 focus on constructing valid and efficient prediction sets under known and unknown attacks, respectively, ensuring reliable uncertainty quantification. RQ3 explores optimal defensive strategies within a game-theoretic framework, providing strategic insights to mitigate adversarial threats. Together, these investigations advance the development of resilient conformal prediction systems capable of maintaining coverage guarantees even in the presence of adversarial perturbations.

\section{Methodology}
\label{sec:result}
In this section, we detail our approach to addressing the three key research questions outlined in the problem formulation. Each research question is tackled in its respective subsection, where we describe the strategies that form the backbone of our robust CP system against adversarial attacks.

\subsection{Addressing RQ1: Known Adversarial Attack}
\label{sec:method_rq1}

\textbf{RQ1}: Given that a known attack $g_j$ is applied to the test dataset $\{x_i\}_{i\in \Itest}$, how can we use a defensive model $f_j$ to construct a valid and efficient conformal prediction set?

To address RQ1, we leverage the knowledge of the attack \( g_j \) by applying it to the calibration set \(\{(x_i, y_i)\}_{i \in \Ical}\), which is exchangeable with the test set. By replicating the attack \( g_j \) on the calibration data and obtaining \(\{(x'_i, y_i)\}_{i \in \Ical}\), we can compute the corresponding threshold \(q^{j}_{1-\alpha} \). This threshold is then used to construct the prediction sets \(\Gamma(x'_i)\) for each \( i \in \Itest \). This approach follows the traditional split conformal prediction framework (Algorithm \ref{alg:split}), with the key difference that both the calibration and test datasets are subjected to adversarial attacks. The exchangeability between the calibration and test sets ensures that the coverage guarantee (\ref{eq:coverage}) of conformal prediction remains valid even under these adversarial perturbations.

\begin{algorithm}[ht]
\small{
\caption{\small{Constructing Conformal Prediction Sets under Known Attack (RQ1)}}
\begin{algorithmic}[1]
\Require Training dataset $\{(x_i, y_i)\}_{i \in \Itrain}$, Calibration dataset $\Ical = \{(x_i, y_i)\}_{i \in \Ical}$, 

Test dataset $\Itest = \{x_i\}_{i \in \Itest}$, Known attack function $g_j$, 

Coverage probability $1 - \alpha$
\Ensure Prediction sets $\left( \Gamma(x_i) \right)_{i \in \Itest}$

\rlap{\hbox{\textcolor{darkgreen}{\Comment{Train defensive models $f_k$ for each attack using attacked training set:}}}}

\State \For{each attack function $g_k, k\in 1, \dots, m$}{Apply attack $g_k$ to the training set to obtain:

\centerline{$x'_i = g_k(x_i, y_i, f_0), i\in \Itrain$.}

Train the defensive model $f_k$ on the attacked training set $\{(x'_i, y_i)\}_{i\in \Itrain}$.}

\rlap{\hbox{\textcolor{darkgreen}{\Comment{Calibrate each defensive model by applying the known attack $g_j$ to the calibration set:}}}}
\State Apply the known attack $g_j$ to the calibration set $\Ical$ to obtain $ \{(x'_i, y_i)\}_{i \in \Ical}$, where $x'_i = g_j(x_i, y_i, f_0)$.

\State Compute non-conformity scores $s(x_i, y_i)$ based on $f_j(x_i)$ for each $(x_i, y_i) \in \Ical$.
\State Determine the quantile threshold $q_{\alpha}$ as the $\lceil (1 + |\Ical|)(1 - \alpha) \rceil$-th smallest score in $\Ical$.

\rlap{\hbox{\textcolor{darkgreen}{\Comment{Construct prediction sets using the quantile estimated on the attacked calibration set}}}}
\State \For{each test instance $x_i$, $i \in \Itest$}{
Construct the prediction set: \[
\Gamma(x_i) = \{ y \in \mathcal{Y} \mid s(x_i, y) \leq q_{\alpha} \}
\]}
\end{algorithmic}\label{alg:rq1}}
\end{algorithm}

\textbf{Procedure}: The steps outlined in Algorithm~\ref{alg:rq1} systematically apply the known adversarial attack to the calibration set, train an adversarially robust defensive model $f_j$, compute non-conformity scores based on the defensive model's predictions, and determine the appropriate quantile threshold to construct valid and efficient prediction sets for the test data.

\textbf{Intuition}: By applying the known attack $g_j$ to the calibration set and training the defensive model $f_j$ accordingly, we create a scenario where the calibration set remains exchangeable with the adversarially perturbed test set. This alignment guarantees that the coverage condition $\mathbb{P}(y_i \in \Gamma(x_i)) \geq 1 - \alpha$ holds, while the adversarially trained classifier $f_j$ optimizes the efficiency of the prediction sets.

\subsection{Addressing RQ2: Unknown Adversarial Attack}
\label{sec:method_rq2}

\textbf{Research Question 2 (RQ2)}: Given that an unknown adversarial attack is applied to the test dataset $\Itest$, how can we use a defensive model to construct a valid and efficient conformal prediction set $\Gamma(x_i)$?

In scenarios where the adversarial attack is unknown, we cannot rely on applying the same attack strategy to the calibration set to maintain exchangeability. To ensure the coverage condition under these circumstances, we adopt a conservative approach by considering the worst-case adversarial scenario.

\begin{algorithm}[ht]
\small{
\caption{Constructing Conformal Prediction Sets Across All Potential Attacks (RQ2)}
\begin{algorithmic}[1]
\Require Training dataset $\{(x_i, y_i)\}_{i \in \Itrain}$, Calibration dataset $\{(x_i, y_i)\}_{i \in \Ical}$, 

Post-unknown attack test dataset $\{x_i\}_{i \in \Itest}$, 

Set of potential attack functions $g_j, j=1,\dots,m$, 

Coverage probability $1 - \alpha$.
\Ensure Prediction sets $\left( \Gamma(x_i) \right)_{i \in \Itest}$ for a single defensive model $f_k$
\State Based on the training method for $f_k$ outlined in Algorithm \ref{alg:rq1}, we obtain the pre-trained defensive models $f_k$, $k=1, \dots, n$.

\rlap{\hbox{\textcolor{darkgreen}{\Comment{Determine the largest quantile score across various attacks $g_j$ on the calibration set:}}}}
\State 
\For{each attack function $g_j, j\in 1, \dots, m$}
{Apply attack $g_j$ to the calibration set to obtain:

\centerline{$x'_i = g_j(x_i, y_i, f_k), i\in \Ical$.}

Use $f_k(x'_i)$ to compute non-conformity scores $\{s^j_{k}(x'_i, y_i)\}_{i\in \Ical}$.

Compute the quantile threshold as:

\centerline{$q^{k_{j}}_{1-\alpha}$ as the $\lceil (1 + |\Ical|)(1 - \alpha) \rceil$-th smallest score in ${s^j_{k}(x'_i, y_i)}_{i\in \Ical}$.}}

\State Determine the maximum quantile threshold $q^{k}_{1-\alpha} = \max_{j=1, \dots, m} q^{k_{j}}_{1-\alpha}$.

\rlap{\hbox{\textcolor{darkgreen}{\Comment{Construct conservative prediction sets using the largest quantile score to ensure coverage:}}}}
\State 
\For{each test instance $x_i$, $i \in \Itest$}
    {Construct the conservative prediction set for the defensive model $f_k(x'_i)$:
    \[
    \Gamma(x_i) = \{ y \in \mathcal{Y} \mid s(x_i, y) \leq q^{k}_{1-\alpha} \}
    \]}
\end{algorithmic}\label{alg:rq2}}
\end{algorithm}

\textbf{Procedure}: The steps in Algorithm~\ref{alg:rq2} involve applying each potential adversarial attack from the set $\mathcal{G}$ to the calibration set, training defensive models against each attack, computing corresponding quantile thresholds, and selecting the maximum quantile threshold to ensure coverage under the worst-case scenario when the exact nature of the attack is unknown.

\textbf{Intuition}: By evaluating multiple potential adversarial attacks and selecting the most stringent quantile threshold, we ensure that the prediction sets remain valid even when the exact nature of the attack is unknown. This conservative approach guarantees coverage but may result in larger prediction sets due to the maximization step.

\subsection{Addressing RQ3: Game-Theoretic Adversarial Attack}
\label{sec:method_rq3}

\textbf{Research Question 3 (RQ3)}: Given that an unknown and potentially adversarial attack is applied to the test dataset \(\Itest\), how can we determine the optimal defensive strategy for the defender under a game-theoretic setting?

To address RQ3, we model the interactions between the attacker and the defender as a zero-sum game, where the attacker aims to maximize the size of the prediction sets (thereby inducing uncertainty), and the defender seeks to minimize the prediction set sizes while maintaining the coverage guarantee. Specifically, the defender selects a defensive model from a predefined set of defensive models\footnote{We can have more defensive models than attacks, as exemplified by the Maximum classifier and the Minimum classifier introduced in Section \ref{subsec:rq3_experiment}.} \(\{f_k\}_{k=1}^{p}\), while the attacker chooses an attack from a set of possible attacks \(\{g_j\}_{j=1}^{m}\). Formally, the defender's objective is to minimize the maximum possible adversarial impact by solving the following optimization problem:
\[
\min_{f_k, q_{1-\alpha}^{k}} \max_{g_j} \sum_{i \in \Itest} \mathbb{E}\left[ |\Gamma(x'_i)| \right],
\]
subject to the coverage constraint:
\[
\mathbb{P}\left(y_i \in \Gamma(x_i)\right) \geq 1 - \alpha,
\]
where \( x'_i = g_j(x_i, y_i, f_0) \) represents the adversarially perturbed input under attack \( g_j \), and the non-conformity score associated with the perturbed input \( s^{k}(x'_i, y_i) \) is computed based on $f_k$, i.e., the outcome is determined by both parties. Here, \(\Gamma(x'_i)\) is the prediction set constructed using the defensive model \( f_k \) and the corresponding threshold \( q_k \). As illustrated in Algorithm \ref{alg:rq2}, $q_k$ is obtained as the maximum of all $q^{j}_{1-\alpha}$ of the scores computed using the defensive model $f_k$ on the calibration set attacked by $g_j$. To rigorously support our claim that the coverage guarantee is maintained even in the presence of strategic adversarial perturbations, we state and prove the following theorem.

\begin{theorem}[Robust Coverage Guarantee]
\label{thm:robust_coverage}
Let \(\{f_k\}_{k=1}^{p}\) be a set of predefined defensive models and \(\{g_j\}_{j=1}^{m}\) be a set of adversarial attacks. Suppose that:
\begin{enumerate}
    \item The calibration set \(\mathcal{I}_{\text{cal}}\) and the test \(\mathcal{I}_{\text{test}}\) set are exchangeable.  
    \item For any given defensive model \(f_k\) and any attack \(g_j\), the nonconformity score \(s^k(x'_i, y_i)\) satisfies regularity conditions that ensure the existence and proper ordering of quantiles (e.g., measurability and monotonicity properties).
    \item The threshold \(q_{1-\alpha}^{k}\) is defined as
    \[
    q_{1-\alpha}^{k} = \max_{j \in \{1,\ldots,m\}} q^{k_{j}}_{1-\alpha},
    \]
    where each \(q^{k_{j}}_{1-\alpha}\) is computed on \(\mathcal{I}_{\text{cal}}\) so that
    \[
    \frac{1}{|\mathcal{I}_{\text{cal}}|}\sum_{i \in \mathcal{I}_{\text{cal}}} \mathbf{1}\{ s_{k}^{j}(x'_i, y_i) > q^{k_{j}}_{1-\alpha} \} \le \alpha.
    \]
\end{enumerate}
Then, for every attack \(g_j\), the prediction set
\[
\Gamma(x'_i) = \{ y \in \mathcal{Y} : s_{k}^{j}(x'_i, y) \le q_{1-\alpha}^{k} \}
\]
satisfies the coverage guarantee
\[
\mathbb{P}(y \in \Gamma(x'_i)) \ge 1-\alpha.
\]
\end{theorem}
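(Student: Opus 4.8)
The plan is to reduce the statement to the classical split-conformal coverage bound, applied separately for each fixed attack, and then pass from the attack-specific threshold to the conservative maximum threshold $q_{1-\alpha}^{k}$ by a monotonicity (superset) argument. Throughout, I fix an arbitrary attack $g_j$ and work with the perturbed data $(x'_i, y_i)$, where $x'_i = g_j(x_i, y_i, f_0)$, on the calibration index set $\Ical$ and on a generic test point $x'_{n+1}$ from $\Itest$.

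First I would verify that exchangeability is preserved under the attack. Since $f_0$ is trained on $\Itrain$, it is independent of $\Ical \cup \Itest$, and since $g_j(\cdot,\cdot,f_0)$ acts on each sample individually (possibly using auxiliary randomness drawn i.i.d.\ across samples, as in SPSA), the map $(x_i,y_i)\mapsto(x'_i,y_i)$ is a single measurable function applied coordinatewise. Hence, by Assumption 1, $\{(x'_i,y_i)\}_{i\in\Ical\cup\{n+1\}}$ is exchangeable, and so is the induced collection of nonconformity scores $\{s^j_k(x'_i,y_i)\}_{i\in\Ical\cup\{n+1\}}$, with Assumption 2 guaranteeing that these scores are well-defined and that their empirical quantiles are properly ordered.

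Next I would invoke the standard split-conformal argument for the single attack $g_j$: because $q^{k_j}_{1-\alpha}$ is chosen (Assumption 3) so that at most an $\alpha$-fraction of the calibration scores exceed it, the exchangeability of the augmented score sample yields
\[
\mathbb{P}\bigl(s^j_k(x'_{n+1}, y_{n+1}) \le q^{k_j}_{1-\alpha}\bigr) \ge 1-\alpha,
\]
that is, the attack-specific set $\Gamma_j(x'_i):=\{y\in\mathcal Y : s^j_k(x'_i,y)\le q^{k_j}_{1-\alpha}\}$ already attains coverage $\ge 1-\alpha$. Finally, since $q_{1-\alpha}^{k}=\max_{j'} q^{k_{j'}}_{1-\alpha}\ge q^{k_j}_{1-\alpha}$ holds deterministically (pointwise in the calibration data), we get the inclusion $\Gamma_j(x'_i)\subseteq\Gamma(x'_i)$, hence $\mathbf 1\{y_i\in\Gamma_j(x'_i)\}\le\mathbf 1\{y_i\in\Gamma(x'_i)\}$ pointwise; taking expectations gives $\mathbb{P}(y_i\in\Gamma(x'_i))\ge\mathbb{P}(y_i\in\Gamma_j(x'_i))\ge 1-\alpha$. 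As $j$ was arbitrary, the conclusion follows for every attack.

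I expect the only genuinely delicate point to be the first step: the proof silently needs each $g_j$ to be a pointwise transformation with $f_0$ frozen and independent of the calibration/test split, since a dataset-dependent or adaptively retrained attack would destroy exchangeability. A secondary, purely bookkeeping issue is reconciling the empirical-quantile condition of Assumption 3 with the $\lceil(1+|\Ical|)(1-\alpha)\rceil$ order-statistic form of the classical guarantee (an off-by-one in the rank). Everything after the exchangeability reduction is the routine superset/monotonicity observation and requires no real work.
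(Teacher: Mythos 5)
Your proposal is correct and follows essentially the same route as the paper's proof: derive attack-specific coverage from the calibration quantile condition via exchangeability, then pass to the maximum threshold $q^{k}_{1-\alpha}\ge q^{k_j}_{1-\alpha}$ by the superset/monotonicity argument and note that the attack was arbitrary. Your added remarks — that exchangeability must be preserved because each $g_j$ acts pointwise with $f_0$ frozen and trained only on $\Itrain$, and the rank bookkeeping with $\lceil(1+|\Ical|)(1-\alpha)\rceil$ — are refinements of steps the paper treats implicitly, not a different argument.
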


\begin{proof}
Fix any defensive model \(f_k\) and an arbitrary attack \(g_{j}\). By the construction on the calibration set under the attack \(g_{j}\), the quantile \(q^{k_{j}}_{1-\alpha}\) satisfies
\[
\frac{1}{|\mathcal{I}_{\text{cal}}|}\sum_{i \in \mathcal{I}_{\text{cal}}} \mathbf{1}\Big\{ s_{k}^{j}(x'_i, y_i) > q^{k_{j}}_{1-\alpha} \Big\} \le \alpha.
\]
This implies
\[
\mathbb{P}\Big( s_{k}^{j}(x'_i, y_i) \le q^{k_{j}}_{1-\alpha} \Big) \ge 1-\alpha,
\]
where the probability is taken with respect to the data distribution (leveraging the exchangeability or i.i.d. assumption between the calibration and test sets).

Since the threshold \(q^{k}_{1-\alpha}\) is defined as the maximum over all attacks, we have
\[
q^{k}_{1-\alpha} = \max_{1 \le j \le m} q^{k_{j}}_{1-\alpha} \ge q^{k_{j}}_{1-\alpha}.
\]
Under the assumption of the monotonicity of the indicator function, i.e.,
\[
\{s^k(x'_i, y_i) \le q^{k_{j}}_{1-\alpha}\} \subseteq \{s^k(x'_i, y_i) \le q^{k}_{1-\alpha}\},
\]
it directly follows that
\[
\mathbb{P}\Big( s^k(x'_i, y_i) \le q_k \Big) \ge \mathbb{P}\Big( s^k(x'_i, y_i) \le q^{j_0}_{1-\alpha} \Big) \ge 1-\alpha.
\]
Since \(j\) was arbitrary, this argument holds for any attack \(g_j\). Hence, the prediction set constructed with \(q^{k}_{1-\alpha}\) satisfies the desired coverage guarantee:
\[
\mathbb{P}(y \in \Gamma(x'_i)) \ge 1-\alpha.
\]
\end{proof}

This optimization framework seeks to identify the defensive model \( f_k \) and threshold \( q_k \) that offer the most robust defense against the worst-case attack \( g_j \). It further weighs the balance between coverage rate and the size of conformal prediction sets. By framing the interaction between the defender and attacker within a game-theoretic context, the framework systematically assesses and bolsters the resilience of conformal prediction sets against a multitude of adversarial threats. Simultaneously, the defender strives to minimize the size of the prediction set to decrease model uncertainty, thereby ensuring that the predictive system maintains coverage guarantees while offering a smaller, more credible range for medical imaging decisions, as excessively large prediction sets lack practical significance.

\textbf{Procedure}: The steps outlined in Algorithm~\ref{alg:rq3} involve utilizing a calibration set and an evaluation set, which are interchangeable with the test set. The process begins by applying each potential adversarial attack to the calibration set. Subsequently, defensive models are trained against each attack. Quantile thresholds and corresponding prediction set sizes are then computed on the evaluation set to construct the payoff matrix. A key aspect of this algorithm, specifically in Step 4, is the utilization of the NashPy package to solve the zero-sum game. This involves employing an enumerative approach to iterate through all possible support set combinations, constructing and solving the corresponding linear equation systems to identify mixed strategies that satisfy the Nash equilibrium conditions. For each identified strategy combination, NashPy verifies its non-negativity and normalization, ultimately returning all feasible mixed strategies as the Nash equilibria of the game. Finally, the selected quantile threshold is applied to construct conservative prediction sets for the test data.

\textbf{Intuition}: By modeling the interaction between the attacker and the defender as a zero-sum game and evaluating the expected prediction set sizes across different attack and defense strategies, we can identify the defensive strategy that offers the most robust protection against the worst-case adversarial attacks. Selecting the largest quantile threshold $q_k$ for each selected model $f_k$ ensures that the coverage condition is maintained, even in the presence of the most challenging adversarial perturbations.

\begin{algorithm}[H]
\small
\caption{Game-Theoretic Optimal Defensive Strategy (RQ3)}
\begin{algorithmic}[1]
\Require Calibration dataset $\{(x_i, y_i)\}_{i \in \Ical}$, Evaluation dataset $\Ieval = \{(x_i, y_i)\}_{i \in \Ieval}$, 

Test dataset $\{x_i\}_{i \in \Itest}$, Set of potential attack functions $g_j, j=1,\dots,m$, 

Set of defensive models $f_k, k=1,\dots,p$, Coverage probability $1 - \alpha$
\Ensure Optimal defensive strategy for constructing prediction sets $\left(\Gamma(x_i)\right)_{x_i \in \Itest}$

\rlap{\hbox{\textcolor{darkgreen}{\Comment{Determine the largest quantile score $q_k$ for each defensive model $f_k$ across various attacks:}}}}

\State
\For{each defensive model $f_k, k=1,\dots,p$}
{

\For{each attack function $g_j, j= 1, \dots, m$}
{Apply attack $g_j$ to the calibration set to obtain:

\centerline{$x'_i = g_j(x_i, y_i, f_0), i\in \Ical$.}

Use $f_k(x'_i)$ to compute non-conformity scores $\{s_{k}^j(x'_i, y_i)\}_{i\in \Ical}$.

Compute the quantile threshold as:

\centerline{$q^{k_{j}}_{1-\alpha}$ as the $\lceil (1 + |\Ical|)(1 - \alpha) \rceil$-th smallest score in $\{s_{k}^j(x'_i, y_i)\}_{i\in \Ical}$.}}

Determine the maximum quantile threshold $q_{1-\alpha}^{k} = \max_{j=1, \dots, m} q^{k_{j}}_{1-\alpha}$ for defensive model $f_k$.
}

\rlap{\hbox{\textcolor{darkgreen}{\Comment{Estimate the prediction set sizes on the evaluation set:}}}}

\State
\For{each defensive model $f_k, k=1,\dots,p$}
{

\For{each attack function $g_j, j=1,\dots,m$}
{
Compute the average prediction set size on the evaluation set:

\centerline{$\frac{1}{|\Ieval|}\sum_{i \in \Ieval} |\Gamma(x'_i)| $,}

where $\Gamma(x'_i) = |\{ y \in \mathcal{Y} \mid s_{k}^j(x'_i, y) \leq q_{1-\alpha}^{k} \}|$, with $x'_i$ the perturbed input attacked by $g_j$ and $s_{k}^j(x'_i, y)$ computed using $f_k$.

}
}

\rlap{\hbox{\textcolor{darkgreen}{\Comment{Optimal Defensive Strategy Derivation under Nash Equilibrium:}}}}
\State Construct the payoff matrix $P$ where each entry $P_{k,j}$ represents the expected prediction set size when the attacker selects attack $g_j$ and the defender selects defensive model $f_k$.
\State Solve the zero-sum game using the payoff matrix $P$ to identify the optimal defensive strategy.
\State Select defensive models \( f_k, \, k = 1, \dots, p \) according to the optimal defensive strategies.

\rlap{\hbox{\textcolor{darkgreen}{\Comment{Construct the prediction sets using the optimal defensive strategy:}}}}
\State
\For{each test instance $x_i, i \in \Itest$}
    {Construct the conservative prediction set:
    \[
    \Gamma(x_i) = \{ y \in \mathcal{Y} \mid s(x_i, y) \leq q_{1-\alpha}^{k} \}
    \]
    with probability equal to the probability of selecting \( f_k \) in the (potentially mixed) strategy.}
\end{algorithmic}\label{alg:rq3}
\end{algorithm}

\section{Experiment}
\subsection{Experimental Design}
\subsubsection{Dataset Preparation}
The experiments utilize the MedMNIST dataset suite~\cite{yang2023medmnist,yang2021medmnist}. Three specific datasets are employed, each with unique characteristics. The datasets are selected via the \texttt{--dataset} command-line argument, which allows the user to specify one of the available datasets (e.g., \texttt{PathMNIST}, \texttt{OrganAMNIST}, or \texttt{TissuMNIST}). This design provides flexibility in choosing different organ-specific visual recognition tasks and facilitates reproducibility of the experiments.

Specifically, \textbf{PathMNIST} comprises 97,176 images of colon pathology, each of size $28\times28$ pixels, categorized into nine classes. \textbf{OrganAMNIST} contains 34,561 abdominal CT scans, similarly resized to $28\times28$ pixels, and distributed across eleven classes. \textbf{TissuMNIST} includes 236,386 images of human kidney cortex cells originally sized $32\times32\times7$. For TissuMNIST, 2D projections are obtained by taking a maximum pixel value along the axial axis, and the resulting images are resized to $28\times28$ pixels. For the predictive models, all experiments deploy the ResNet18 architecture as the backbone \cite{he2016deep}.

Overall, the experimental setup is designed to rigorously evaluate both standard and adversarially-trained models across multiple datasets and attack scenarios, providing insights into model performance and adversarial resilience.

\subsubsection{Data Splitting and Experimental Setup}
For each dataset and every defensive model, including the normal (non-adversarial) model, we adopt a stratified splitting procedure to divide the data into training, validation, and test sets. This approach is designed to preserve the original class distribution across all splits. Specifically, 50\% of the training data is used to train the defensive models, while 10\% is allocated for validation purposes. The remaining data constitutes the test set.

The test set is then further subdivided based on the research question to ensure balanced representation:

\begin{itemize}
    \item \textbf{RQ1 and RQ2:} The test dataset is split equally into a calibration set and a test set, each comprising 50\% of the original test data.
    \item \textbf{RQ3:} The test dataset is partitioned into three subsets: 25\% for calibration, 25\% for evaluation, and 50\% for the final test set.
\end{itemize}

All experimental settings use an alpha value of 0.1. This stratified splitting procedure ensures that each class is appropriately utilized for training, validation, and evaluation, thereby supporting consistent performance assessment of the defensive models under different adversarial attack scenarios.

The experimental setup comprises several key components, including the model architecture, training parameters, and adversarial example generation. We employ ResNet18 adapted for single-channel input and appropriate output layers as our model architecture. Training parameters include optimization using Stochastic Gradient Descent (SGD) with learning rate scheduling, a batch size set to 128, and training conducted over 60 epochs. Adversarial examples are generated using TorchAttacks, specifically implementing FGSM, PGD, and SPSA attacks during both the training and evaluation phases.

\subsubsection{Adversarial Attack Types}
Adversarial attacks are implemented using TorchAttacks \cite{kim2020torchattacks}. The selected attacks include FGSM, PGD, SPSA, CW and Clean. Each attack type is designed to target the defensive model against which it is trained, facilitating comprehensive evaluation of model robustness across a variety of adversarial strategies. Employing multiple attack methodologies allows for a thorough assessment of a model's vulnerabilities and ensures that defenses are not tailored to only one specific type of adversarial manipulation.

\subsubsection{Conformal Prediction Calibration and Evaluation}
Conformal Prediction is performed using TorchCP \cite{wei2024torchcp}. To assess the performance of conformal prediction methods, we employ three primary metrics: Coverage, Size, and Size-Stratified Coverage Violation (SSCV). 

The \textbf{Coverage} measures the proportion of test instances in \(\Itest\) where the true label is contained within the prediction set \(\Gamma(x_i)\), and is defined as
\begin{equation}\label{eq:evaluate_coverage}
    \textrm{Coverage} = \frac{1}{|\Itest|} \sum_{i \in \Itest} \mathbbm{1}\left(y_i \in \Gamma(x_i)\right).
\end{equation}
A higher coverage indicates that the prediction sets reliably contain the true labels.

The \textbf{Size} metric calculates the average number of labels in the prediction sets across all test instances,
\begin{equation}\label{eq:size}
    \textrm{Size} = \frac{1}{|\Itest|} \sum_{i \in \Itest} |\Gamma(x_i)|,
\end{equation}
where smaller sizes denote more precise and informative predictions.

The \textbf{Size-Stratified Coverage Violation} (SSCV) \cite{angelopoulosuncertainty} evaluates the consistency of coverage across different prediction set sizes. It is defined as
\begin{align}\label{eq:sscv}
    \text{SSCV}(\Gamma, \{S_j\}_{j=1}^s) = \sup_{j \in [s]} \left|\frac{|\{i \in J_j: y_i \in \Gamma(x_i)\}|}{|J_j|} - (1 - \alpha)\right|,
\end{align}
where \(\{S_j\}_{j=1}^s\) partitions the possible prediction set sizes, and \(J_j = \{i \in \Itest : |\Gamma(x_i)| \in S_j\}\). A smaller SSCV indicates more stable coverage across different set sizes.

These metrics balance the trade-off between achieving the desired coverage probability and maintaining informative prediction sets, while ensuring that the coverage guarantees of conformal prediction hold regardless of the underlying model's accuracy.

\subsection{Results}
The trained models are evaluated against both known and unknown adversarial attacks to assess their robustness and the effectiveness of the conformal prediction sets. For RQ1, models are tested against the known attack applied during calibration. For RQ2, models are evaluated against unseen attacks to determine their generalizability. For RQ3, the game-theoretic approach is analyzed to identify optimal defensive strategies under adversarial conditions. For all experiments, we report the average and standard deviation across a number of splits to ensure statistical robustness.
\paragraph{Conformal Prediction Set Efficiency} The model with the best accuracy does not necessarily produce the most efficient conformal prediction sets. This discrepancy may be influenced by the chosen non-conformity score function. In our experiments, we evaluated four conformal prediction (CP) methods: the standard APS \cite{NEURIPS2020_244edd7e}, RSCP \cite{gendler2021adversarially},  VRCP-I \cite{jeary2024verifiably}, and VRCP-C \cite{jeary2024verifiably}. The standard APS method uses the cumulative probability of labels that have the same or lower estimated probability than the target label, emphasizing the tail probabilities. RSCP extends this approach by incorporating Gaussian noise into the inputs and averaging the scores across multiple samples, effectively “smoothing” the estimated scores. In contrast, VRCP-I and VRCP-C introduce a robustness component by applying adversarial perturbations to the inputs within a specified $\epsilon$-ball. Specifically, VRCP-I computes the minimum score over multiple perturbed copies (thus capturing a more conservative score for each class), while VRCP-C focuses on the worst-case score across these perturbations. These methods collectively allow us to dynamically adjust prediction sets to better capture uncertainty, with the RSCP and VRCP variants enhancing robustness under adversarial conditions, which may result in larger prediction sets for accuracy-driven models.

\subsubsection{Results for RQ1 and RQ2}
In our experimental evaluation, we compared two distinct scenarios: one in which the attack’s nature is known (RQ1), and another in which we assume that an unknown attack belongs to one of five predefined categories (FGSM, PGD, SPSA, CW, or Clean) (RQ2). Both experiments are conducted under the same conformal prediction (CP) framework, which enables us to directly assess how the CP coverage and prediction set sizes vary under different conditions. Additionally, in Tables 3 to 6, the bold entries highlight the best defense model for a particular CP method under each type of attack, while underlined entries indicate the corresponding metric for the best CP method under a specific attack and defense model combination.

For RQ1, known attacks are applied to the test data. For each attack \( g_j \) (where \( j = 1, \dots, m \)), we train a corresponding defensive model \( f_j \) alongside a baseline model \( f_0 \). In contrast, for RQ2 we assume that the unknown attack belongs to one of the five designated categories and apply it to the test data. We then employ Algorithm \ref{alg:rq2} to determine the maximum threshold obtained across the calibration sets under different attacks, thereby facilitating the construction of a conservative CP that guarantees the required coverage level. To better highlight the connections and distinctions between RQ1 and RQ2, we discuss the results from both scenarios together.
    
\begin{table}
\centering
\caption{\textbf{RQ1:} Mean and Standard Deviation of Coverage, Size, and SSCV for \textbf{OrganAMNIST}}
\resizebox{0.9\textwidth}{!}{
\begin{tabular}{ll *{4}{c} *{4}{c} *{4}{c}}
\cmidrule(lr){1-14}
\makecell{Attack} & \makecell{Defensive} &
\multicolumn{4}{c}{\makecell{Coverage (\%)}} &
\multicolumn{4}{c}{Size} &
\multicolumn{4}{c}{SSCV} \\
\cmidrule(lr){3-6} \cmidrule(lr){7-10} \cmidrule(lr){11-14}
 & Model & APS & RSCP & VRCP-I & VRCP-C & APS & RSCP & VRCP-I & VRCP-C & APS & RSCP & VRCP-I & VRCP-C \\
\cmidrule(lr){1-14}
\multirow{4}{*}{FGSM} & Normal & 89.88 $\pm$ 0.51 & 89.40 $\pm$ 0.52 & \textbf{90.27 $\pm$ 0.50} & \underline{91.70 $\pm$ 0.46} & \underline{8.08 $\pm$ 0.05} & 8.13 $\pm$ 0.05 & 8.08 $\pm$ 0.05 & 8.39 $\pm$ 0.05 & 0.10 $\pm$ 0.03 & \underline{0.10 $\pm$ 0.02} & 0.10 $\pm$ 0.03 & 0.10 $\pm$ 0.02 \\
 & FGSM & 89.82 $\pm$ 0.51 & 90.19 $\pm$ 0.50 & 89.45 $\pm$ 0.52 & \underline{92.29 $\pm$ 0.45} & 1.48 $\pm$ 0.02 & 1.82 $\pm$ 0.02 & \underline{1.36 $\pm$ 0.01} & \textbf{1.67 $\pm$ 0.02} & 0.06 $\pm$ 0.01 & 0.07 $\pm$ 0.02 & 0.08 $\pm$ 0.02 & \underline{\textbf{0.04 $\pm$ 0.00}} \\
 & PGD & 90.07 $\pm$ 0.50 & 90.44 $\pm$ 0.49 & 90.19 $\pm$ 0.50 & \underline{\textbf{94.21 $\pm$ 0.39}} & \textbf{1.43 $\pm$ 0.02} & \textbf{1.59 $\pm$ 0.02} & \underline{\textbf{1.25 $\pm$ 0.01}} & 1.72 $\pm$ 0.02 & \underline{\textbf{0.05 $\pm$ 0.01}} & 0.08 $\pm$ 0.02 & 0.09 $\pm$ 0.02 & 0.06 $\pm$ 0.01 \\
 & SPSA & \textbf{90.41 $\pm$ 0.49} & \textbf{90.83 $\pm$ 0.48} & 89.90 $\pm$ 0.51 & \underline{92.21 $\pm$ 0.45} & 1.53 $\pm$ 0.02 & 2.16 $\pm$ 0.02 & \underline{1.44 $\pm$ 0.01} & 1.71 $\pm$ 0.02 & 0.05 $\pm$ 0.01 & \textbf{0.06 $\pm$ 0.01} & \textbf{0.08 $\pm$ 0.02} & \underline{0.04 $\pm$ 0.01} \\
\cmidrule(lr){1-14}
\multirow{4}{*}{PGD} & Normal & \textbf{90.47 $\pm$ 0.49} & 90.19 $\pm$ 0.50 & 89.99 $\pm$ 0.50 & \underline{92.27 $\pm$ 0.45} & 8.90 $\pm$ 0.04 & \underline{8.56 $\pm$ 0.05} & 8.89 $\pm$ 0.04 & 9.22 $\pm$ 0.04 & 0.53 $\pm$ 0.11 & \underline{0.22 $\pm$ 0.04} & 0.54 $\pm$ 0.11 & 0.53 $\pm$ 0.12 \\
 & FGSM & 90.02 $\pm$ 0.50 & \textbf{90.30 $\pm$ 0.50} & \textbf{90.52 $\pm$ 0.49} & \underline{93.14 $\pm$ 0.42} & 1.43 $\pm$ 0.02 & 1.80 $\pm$ 0.02 & \underline{1.36 $\pm$ 0.01} & \textbf{1.67 $\pm$ 0.02} & 0.05 $\pm$ 0.01 & 0.08 $\pm$ 0.02 & \textbf{0.06 $\pm$ 0.01} & \underline{\textbf{0.04 $\pm$ 0.01}} \\
 & PGD & 89.74 $\pm$ 0.51 & 90.04 $\pm$ 0.50 & 90.27 $\pm$ 0.50 & \underline{\textbf{93.79 $\pm$ 0.40}} & \textbf{1.43 $\pm$ 0.02} & \textbf{1.54 $\pm$ 0.02} & \underline{\textbf{1.25 $\pm$ 0.01}} & 1.70 $\pm$ 0.02 & \underline{\textbf{0.04 $\pm$ 0.01}} & 0.09 $\pm$ 0.02 & 0.09 $\pm$ 0.03 & 0.05 $\pm$ 0.01 \\
 & SPSA & 90.35 $\pm$ 0.50 & 90.13 $\pm$ 0.50 & 90.47 $\pm$ 0.49 & \underline{92.77 $\pm$ 0.43} & 1.50 $\pm$ 0.02 & 2.05 $\pm$ 0.02 & \underline{1.41 $\pm$ 0.01} & 1.70 $\pm$ 0.02 & 0.06 $\pm$ 0.01 & \textbf{0.05 $\pm$ 0.01} & 0.07 $\pm$ 0.02 & \underline{0.05 $\pm$ 0.01} \\
\cmidrule(lr){1-14}
\multirow{4}{*}{SPSA} & Normal & 89.15 $\pm$ 0.52 & 89.45 $\pm$ 0.52 & 89.31 $\pm$ 0.52 & \underline{91.11 $\pm$ 0.48} & \underline{7.69 $\pm$ 0.05} & 7.87 $\pm$ 0.05 & 7.70 $\pm$ 0.05 & 7.98 $\pm$ 0.05 & \underline{0.10 $\pm$ 0.02} & 0.13 $\pm$ 0.03 & 0.10 $\pm$ 0.02 & 0.10 $\pm$ 0.02 \\
 & FGSM & \textbf{90.38 $\pm$ 0.49} & 90.21 $\pm$ 0.50 & 89.99 $\pm$ 0.50 & \underline{93.73 $\pm$ 0.41} & 1.40 $\pm$ 0.02 & 1.66 $\pm$ 0.02 & \underline{1.29 $\pm$ 0.01} & 1.61 $\pm$ 0.02 & \underline{0.04 $\pm$ 0.01} & 0.07 $\pm$ 0.01 & 0.10 $\pm$ 0.02 & 0.05 $\pm$ 0.01 \\
 & PGD & 89.85 $\pm$ 0.51 & \textbf{90.27 $\pm$ 0.50} & 90.10 $\pm$ 0.50 & \underline{\textbf{93.90 $\pm$ 0.40}} & \textbf{1.37 $\pm$ 0.01} & \textbf{1.47 $\pm$ 0.02} & \underline{\textbf{1.19 $\pm$ 0.01}} & 1.64 $\pm$ 0.02 & \underline{\textbf{0.01 $\pm$ 0.00}} & 0.09 $\pm$ 0.02 & 0.10 $\pm$ 0.03 & 0.05 $\pm$ 0.01 \\
 & SPSA & 90.35 $\pm$ 0.50 & 89.93 $\pm$ 0.50 & \textbf{91.20 $\pm$ 0.48} & \underline{93.05 $\pm$ 0.43} & 1.41 $\pm$ 0.02 & 1.94 $\pm$ 0.02 & \underline{1.34 $\pm$ 0.01} & \textbf{1.60 $\pm$ 0.02} & 0.07 $\pm$ 0.02 & \textbf{0.07 $\pm$ 0.01} & \textbf{0.06 $\pm$ 0.01} & \underline{\textbf{0.05 $\pm$ 0.01}} \\
\cmidrule(lr){1-14}
\multirow{4}{*}{CW} & Normal & 89.74 $\pm$ 0.51 & 89.40 $\pm$ 0.52 & 89.65 $\pm$ 0.51 & \underline{91.14 $\pm$ 0.48} & \underline{7.74 $\pm$ 0.05} & 7.89 $\pm$ 0.05 & 7.74 $\pm$ 0.05 & 7.99 $\pm$ 0.05 & 0.19 $\pm$ 0.04 & \underline{0.13 $\pm$ 0.02} & 0.19 $\pm$ 0.04 & 0.14 $\pm$ 0.03 \\
 & FGSM & 89.99 $\pm$ 0.50 & 90.52 $\pm$ 0.49 & 90.33 $\pm$ 0.50 & \underline{93.00 $\pm$ 0.43} & \textbf{1.35 $\pm$ 0.01} & 1.69 $\pm$ 0.02 & \underline{1.26 $\pm$ 0.01} & \textbf{1.57 $\pm$ 0.02} & 0.05 $\pm$ 0.01 & 0.06 $\pm$ 0.01 & \textbf{0.07 $\pm$ 0.02} & \underline{\textbf{0.04 $\pm$ 0.01}} \\
 & PGD & 89.90 $\pm$ 0.51 & 90.80 $\pm$ 0.48 & 89.93 $\pm$ 0.50 & \underline{\textbf{94.52 $\pm$ 0.38}} & 1.38 $\pm$ 0.01 & \textbf{1.48 $\pm$ 0.02} & \underline{\textbf{1.18 $\pm$ 0.01}} & 1.66 $\pm$ 0.02 & \underline{\textbf{0.05 $\pm$ 0.01}} & 0.10 $\pm$ 0.02 & 0.13 $\pm$ 0.03 & 0.06 $\pm$ 0.02 \\
 & SPSA & \textbf{90.55 $\pm$ 0.49} & \textbf{90.89 $\pm$ 0.48} & \textbf{90.80 $\pm$ 0.48} & \underline{93.62 $\pm$ 0.41} & 1.40 $\pm$ 0.02 & 1.98 $\pm$ 0.02 & \underline{1.32 $\pm$ 0.01} & 1.61 $\pm$ 0.02 & 0.06 $\pm$ 0.02 & \textbf{0.05 $\pm$ 0.01} & 0.08 $\pm$ 0.01 & \underline{0.05 $\pm$ 0.01} \\
\cmidrule(lr){1-14}
\multirow{4}{*}{Clean} & Normal & 90.24 $\pm$ 0.50 & \textbf{91.84 $\pm$ 0.46} & \textbf{91.03 $\pm$ 0.48} & \underline{93.17 $\pm$ 0.42} & 1.27 $\pm$ 0.01 & 1.26 $\pm$ 0.01 & \underline{1.19 $\pm$ 0.01} & 1.44 $\pm$ 0.02 & 0.10 $\pm$ 0.02 & \textbf{0.07 $\pm$ 0.01} & 0.11 $\pm$ 0.02 & \underline{\textbf{0.05 $\pm$ 0.01}} \\
 & FGSM & 90.72 $\pm$ 0.49 & 89.85 $\pm$ 0.51 & 89.82 $\pm$ 0.51 & \underline{92.86 $\pm$ 0.43} & 1.29 $\pm$ 0.01 & 1.19 $\pm$ 0.01 & \underline{\textbf{1.17 $\pm$ 0.01}} & 1.42 $\pm$ 0.02 & 0.09 $\pm$ 0.03 & 0.15 $\pm$ 0.04 & 0.15 $\pm$ 0.04 & \underline{0.06 $\pm$ 0.01} \\
 & PGD & \textbf{90.97 $\pm$ 0.48} & 90.41 $\pm$ 0.49 & 90.86 $\pm$ 0.48 & \underline{\textbf{94.09 $\pm$ 0.40}} & \textbf{1.24 $\pm$ 0.01} & \underline{\textbf{1.16 $\pm$ 0.01}} & 1.17 $\pm$ 0.01 & 1.42 $\pm$ 0.02 & \underline{\textbf{0.04 $\pm$ 0.01}} & 0.09 $\pm$ 0.03 & \textbf{0.08 $\pm$ 0.02} & 0.05 $\pm$ 0.01 \\
 & SPSA & 90.49 $\pm$ 0.49 & 89.85 $\pm$ 0.51 & 90.49 $\pm$ 0.49 & \underline{93.81 $\pm$ 0.40} & 1.26 $\pm$ 0.01 & 1.19 $\pm$ 0.01 & \underline{1.18 $\pm$ 0.01} & \textbf{1.42 $\pm$ 0.02} & 0.07 $\pm$ 0.02 & 0.11 $\pm$ 0.03 & 0.13 $\pm$ 0.03 & \underline{0.05 $\pm$ 0.01} \\
\cmidrule(lr){1-14}
\label{RQ1OrganAMNIST}
\end{tabular}
}
\end{table}
\begin{table}
\centering
\caption{\textbf{RQ2:} Mean and Standard Deviation of Coverage, Size, and SSCV for \textbf{OrganAMNIST}}
\resizebox{0.9\textwidth}{!}{
\begin{tabular}{ll *{4}{c} *{4}{c} *{4}{c}}
\cmidrule(lr){1-14}
\makecell{Test} & \makecell{Defensive} &
\multicolumn{4}{c}{\makecell{Coverage (\%)}} &
\multicolumn{4}{c}{Size} &
\multicolumn{4}{c}{SSCV} \\
\cmidrule(lr){3-6} \cmidrule(lr){7-10} \cmidrule(lr){11-14}
Attack & Model & APS & RSCP & VRCP-I & VRCP-C & APS & RSCP & VRCP-I & VRCP-C & APS & RSCP & VRCP-I & VRCP-C \\
\cmidrule(lr){1-14}
\multirow{4}{*}{FGSM} & Normal & \textbf{97.13 $\pm$ 0.28} & \textbf{95.39 $\pm$ 0.35} & \textbf{97.33 $\pm$ 0.27} & \underline{\textbf{98.23 $\pm$ 0.22}} & 9.58 $\pm$ 0.04 & \underline{9.05 $\pm$ 0.04} & 9.46 $\pm$ 0.04 & 9.82 $\pm$ 0.04 & \underline{0.10 $\pm$ 0.02} & 0.10 $\pm$ 0.02 & 0.10 $\pm$ 0.01 & 0.10 $\pm$ 0.02 \\
 & FGSM & 90.07 $\pm$ 0.50 & 90.02 $\pm$ 0.50 & 86.05 $\pm$ 0.58 & \underline{93.48 $\pm$ 0.41} & \textbf{1.46 $\pm$ 0.02} & \textbf{1.45 $\pm$ 0.02} & \underline{1.28 $\pm$ 0.01} & \textbf{1.67 $\pm$ 0.02} & \underline{\textbf{0.03 $\pm$ 0.01}} & 0.04 $\pm$ 0.01 & 0.09 $\pm$ 0.02 & \textbf{0.05 $\pm$ 0.01} \\
 & PGD & 92.18 $\pm$ 0.45 & 91.34 $\pm$ 0.47 & 83.89 $\pm$ 0.62 & \underline{94.66 $\pm$ 0.38} & 1.55 $\pm$ 0.02 & 1.48 $\pm$ 0.02 & \underline{\textbf{1.22 $\pm$ 0.01}} & 1.79 $\pm$ 0.02 & 0.05 $\pm$ 0.01 & \underline{\textbf{0.03 $\pm$ 0.00}} & \textbf{0.08 $\pm$ 0.01} & 0.06 $\pm$ 0.02 \\
 & SPSA & 90.19 $\pm$ 0.50 & 92.01 $\pm$ 0.45 & 87.82 $\pm$ 0.55 & \underline{93.28 $\pm$ 0.42} & 1.50 $\pm$ 0.02 & 1.65 $\pm$ 0.02 & \underline{1.37 $\pm$ 0.01} & 1.76 $\pm$ 0.02 & 0.07 $\pm$ 0.02 & \underline{0.04 $\pm$ 0.00} & 0.08 $\pm$ 0.02 & 0.05 $\pm$ 0.01 \\
\cmidrule(lr){1-14}
\multirow{4}{*}{PGD} & Normal & 90.04 $\pm$ 0.50 & 83.10 $\pm$ 0.63 & \textbf{88.33 $\pm$ 0.54} & \underline{92.07 $\pm$ 0.45} & 8.93 $\pm$ 0.04 & \underline{8.16 $\pm$ 0.04} & 8.76 $\pm$ 0.04 & 9.29 $\pm$ 0.04 & 0.54 $\pm$ 0.12 & 0.53 $\pm$ 0.12 & \underline{0.47 $\pm$ 0.10} & 0.75 $\pm$ 0.17 \\
 & FGSM & 90.64 $\pm$ 0.49 & 90.72 $\pm$ 0.49 & 86.42 $\pm$ 0.57 & \underline{92.58 $\pm$ 0.44} & \textbf{1.47 $\pm$ 0.02} & \textbf{1.47 $\pm$ 0.02} & \underline{1.28 $\pm$ 0.01} & \textbf{1.66 $\pm$ 0.02} & \underline{\textbf{0.03 $\pm$ 0.00}} & 0.05 $\pm$ 0.01 & \textbf{0.08 $\pm$ 0.02} & \textbf{0.04 $\pm$ 0.01} \\
 & PGD & \textbf{92.60 $\pm$ 0.44} & 90.75 $\pm$ 0.49 & 83.61 $\pm$ 0.62 & \underline{\textbf{95.19 $\pm$ 0.36}} & 1.57 $\pm$ 0.02 & 1.48 $\pm$ 0.02 & \underline{\textbf{1.21 $\pm$ 0.01}} & 1.80 $\pm$ 0.02 & 0.04 $\pm$ 0.01 & \underline{\textbf{0.03 $\pm$ 0.00}} & 0.09 $\pm$ 0.02 & 0.06 $\pm$ 0.01 \\
 & SPSA & 90.61 $\pm$ 0.49 & \textbf{92.29 $\pm$ 0.45} & 87.74 $\pm$ 0.55 & \underline{93.14 $\pm$ 0.42} & 1.51 $\pm$ 0.02 & 1.65 $\pm$ 0.02 & \underline{1.36 $\pm$ 0.01} & 1.73 $\pm$ 0.02 & 0.05 $\pm$ 0.01 & \underline{0.04 $\pm$ 0.00} & 0.09 $\pm$ 0.03 & 0.04 $\pm$ 0.01 \\
\cmidrule(lr){1-14}
\multirow{4}{*}{SPSA} & Normal & \textbf{97.72 $\pm$ 0.25} & \textbf{96.26 $\pm$ 0.32} & \textbf{97.53 $\pm$ 0.26} & \underline{\textbf{98.48 $\pm$ 0.21}} & 9.46 $\pm$ 0.04 & \underline{8.96 $\pm$ 0.05} & 9.30 $\pm$ 0.05 & 9.62 $\pm$ 0.04 & \underline{0.10 $\pm$ 0.01} & 0.10 $\pm$ 0.02 & 0.10 $\pm$ 0.02 & 0.10 $\pm$ 0.01 \\
 & FGSM & 90.58 $\pm$ 0.49 & 91.11 $\pm$ 0.48 & 87.04 $\pm$ 0.56 & \underline{93.42 $\pm$ 0.42} & \textbf{1.43 $\pm$ 0.02} & \textbf{1.43 $\pm$ 0.02} & \underline{1.27 $\pm$ 0.01} & \textbf{1.62 $\pm$ 0.02} & 0.05 $\pm$ 0.01 & \underline{\textbf{0.03 $\pm$ 0.01}} & \textbf{0.08 $\pm$ 0.02} & \textbf{0.05 $\pm$ 0.01} \\
 & PGD & 92.63 $\pm$ 0.44 & 91.37 $\pm$ 0.47 & 83.97 $\pm$ 0.62 & \underline{95.22 $\pm$ 0.36} & 1.54 $\pm$ 0.02 & 1.45 $\pm$ 0.02 & \underline{\textbf{1.19 $\pm$ 0.01}} & 1.77 $\pm$ 0.02 & \underline{0.04 $\pm$ 0.01} & 0.06 $\pm$ 0.01 & 0.09 $\pm$ 0.01 & 0.06 $\pm$ 0.02 \\
 & SPSA & 91.84 $\pm$ 0.46 & 92.32 $\pm$ 0.45 & 88.67 $\pm$ 0.53 & \underline{93.93 $\pm$ 0.40} & 1.50 $\pm$ 0.02 & 1.60 $\pm$ 0.02 & \underline{1.33 $\pm$ 0.01} & 1.70 $\pm$ 0.02 & \underline{\textbf{0.03 $\pm$ 0.01}} & 0.04 $\pm$ 0.01 & 0.08 $\pm$ 0.02 & 0.06 $\pm$ 0.01 \\
\cmidrule(lr){1-14}
\multirow{4}{*}{CW} & Normal & \textbf{97.64 $\pm$ 0.25} & \textbf{96.54 $\pm$ 0.31} & \textbf{97.22 $\pm$ 0.28} & \underline{\textbf{98.17 $\pm$ 0.22}} & 9.34 $\pm$ 0.05 & \underline{8.87 $\pm$ 0.05} & 9.20 $\pm$ 0.05 & 9.50 $\pm$ 0.05 & \underline{0.10 $\pm$ 0.01} & 0.10 $\pm$ 0.02 & 0.10 $\pm$ 0.02 & 0.10 $\pm$ 0.01 \\
 & FGSM & 91.48 $\pm$ 0.47 & 90.83 $\pm$ 0.48 & 87.60 $\pm$ 0.55 & \underline{93.19 $\pm$ 0.42} & \textbf{1.43 $\pm$ 0.02} & \textbf{1.41 $\pm$ 0.02} & \underline{1.24 $\pm$ 0.01} & \textbf{1.62 $\pm$ 0.02} & \underline{0.04 $\pm$ 0.00} & 0.05 $\pm$ 0.01 & 0.08 $\pm$ 0.02 & \textbf{0.05 $\pm$ 0.01} \\
 & PGD & 91.99 $\pm$ 0.46 & 91.51 $\pm$ 0.47 & 83.94 $\pm$ 0.62 & \underline{94.69 $\pm$ 0.38} & 1.52 $\pm$ 0.02 & 1.46 $\pm$ 0.02 & \underline{\textbf{1.20 $\pm$ 0.01}} & 1.75 $\pm$ 0.02 & \underline{\textbf{0.03 $\pm$ 0.00}} & \textbf{0.04 $\pm$ 0.00} & \textbf{0.07 $\pm$ 0.02} & 0.06 $\pm$ 0.01 \\
 & SPSA & 91.20 $\pm$ 0.48 & 93.14 $\pm$ 0.42 & 89.43 $\pm$ 0.52 & \underline{93.87 $\pm$ 0.40} & 1.47 $\pm$ 0.02 & 1.59 $\pm$ 0.02 & \underline{1.31 $\pm$ 0.01} & 1.67 $\pm$ 0.02 & 0.05 $\pm$ 0.01 & \underline{0.05 $\pm$ 0.00} & 0.08 $\pm$ 0.02 & 0.06 $\pm$ 0.01 \\
\cmidrule(lr){1-14}
\multirow{4}{*}{Clean} & Normal & \textbf{99.69 $\pm$ 0.09} & \textbf{99.30 $\pm$ 0.14} & \underline{\textbf{99.80 $\pm$ 0.07}} & \textbf{99.66 $\pm$ 0.10} & 5.01 $\pm$ 0.07 & \underline{4.38 $\pm$ 0.06} & 4.80 $\pm$ 0.07 & 5.26 $\pm$ 0.07 & \underline{0.10 $\pm$ 0.00} & 0.10 $\pm$ 0.00 & \textbf{0.10 $\pm$ 0.00} & 0.10 $\pm$ 0.00 \\
 & FGSM & 91.20 $\pm$ 0.48 & 91.11 $\pm$ 0.48 & 87.35 $\pm$ 0.56 & \underline{93.22 $\pm$ 0.42} & 1.29 $\pm$ 0.01 & 1.29 $\pm$ 0.01 & \underline{1.18 $\pm$ 0.01} & 1.45 $\pm$ 0.02 & 0.10 $\pm$ 0.02 & \textbf{0.05 $\pm$ 0.01} & 0.13 $\pm$ 0.04 & \underline{0.05 $\pm$ 0.01} \\
 & PGD & 90.66 $\pm$ 0.49 & 88.70 $\pm$ 0.53 & 79.53 $\pm$ 0.68 & \underline{93.05 $\pm$ 0.43} & \textbf{1.24 $\pm$ 0.01} & \textbf{1.18 $\pm$ 0.01} & \underline{\textbf{1.01 $\pm$ 0.01}} & \textbf{1.37 $\pm$ 0.01} & 0.06 $\pm$ 0.02 & 0.07 $\pm$ 0.02 & 0.15 $\pm$ 0.01 & \underline{\textbf{0.05 $\pm$ 0.01}} \\
 & SPSA & 92.18 $\pm$ 0.45 & 94.09 $\pm$ 0.40 & 89.74 $\pm$ 0.51 & \underline{94.35 $\pm$ 0.39} & 1.34 $\pm$ 0.01 & 1.45 $\pm$ 0.02 & \underline{1.23 $\pm$ 0.01} & 1.51 $\pm$ 0.02 & \underline{\textbf{0.05 $\pm$ 0.01}} & 0.06 $\pm$ 0.01 & 0.10 $\pm$ 0.03 & 0.06 $\pm$ 0.01 \\
\cmidrule(lr){1-14}
\label{RQ2OrganAMNIST}
\end{tabular}
}
\end{table}

First, on the OrganAMNIST dataset as shown in Tables \ref{RQ1OrganAMNIST} and \ref{RQ2OrganAMNIST}, for RQ1 the CP method achieves effective coverage under every defensive model for each attack. In particular, VRCP-I produces the smallest prediction set sizes, VRCP-C achieves the highest coverage, and APS exhibits superior SSCV performance. Moreover, it is noteworthy that although it is intuitive to expect that a model adversarially trained against a specific attack would perform best against that attack, various factors—such as the attack's complexity, model capacity, and the diversity of adversarial perturbations—can lead some defensive models to outperform others, even without being explicitly trained for that specific attack.

For RQ2, most CP methods require a larger \(q\) value compared to the RQ1 condition, leading to CP coverages as high as 99\%. In comparison to the other CP methods, APS maintains a more stable coverage of around 90\% while also delivering better SSCV. RSCP, on the other hand, is more conservative, achieving higher coverage at the expense of larger prediction set sizes. The VRCP-C method outperforms RSCP in terms of coverage while generating smaller prediction sets. Although VRCP-I results in the smallest set sizes, its coverage is less stable due to the limited number of perturbed samples generated by its self-imposed perturbation approach, which may not align consistently with the prediction set samples.
\begin{table}
\centering
\caption{\textbf{RQ1:} Mean and Standard Deviation of Coverage, Size, and SSCV for \textbf{PathMNIST}}
\resizebox{0.9\textwidth}{!}{
\begin{tabular}{ll *{4}{c} *{4}{c} *{4}{c}}
\cmidrule(lr){1-14}
\makecell{Attack} & \makecell{Defensive} &
\multicolumn{4}{c}{\makecell{Coverage (\%)}} &
\multicolumn{4}{c}{Size} &
\multicolumn{4}{c}{SSCV} \\
\cmidrule(lr){3-6} \cmidrule(lr){7-10} \cmidrule(lr){11-14}
 & Model & APS & RSCP & VRCP-I & VRCP-C & APS & RSCP & VRCP-I & VRCP-C & APS & RSCP & VRCP-I & VRCP-C \\
\cmidrule(lr){1-14}
\multirow{4}{*}{FGSM} & Normal & 89.76 $\pm$ 0.80 & 88.86 $\pm$ 0.83 & 89.76 $\pm$ 0.80 & \underline{91.36 $\pm$ 0.74} & 5.81 $\pm$ 0.07 & \textbf{6.26 $\pm$ 0.06} & \underline{5.78 $\pm$ 0.07} & 6.09 $\pm$ 0.07 & 0.58 $\pm$ 0.18 & 0.90 $\pm$ 0.26 & \underline{0.58 $\pm$ 0.18} & 0.61 $\pm$ 0.19 \\
 & FGSM & \textbf{90.81 $\pm$ 0.76} & 90.18 $\pm$ 0.79 & \textbf{90.81 $\pm$ 0.76} & \underline{\textbf{93.94 $\pm$ 0.63}} & 2.99 $\pm$ 0.04 & 6.93 $\pm$ 0.05 & \underline{2.90 $\pm$ 0.04} & 3.38 $\pm$ 0.05 & \underline{0.06 $\pm$ 0.01} & 0.15 $\pm$ 0.07 & 0.07 $\pm$ 0.02 & 0.07 $\pm$ 0.02 \\
 & PGD & 89.21 $\pm$ 0.82 & 89.90 $\pm$ 0.80 & 89.90 $\pm$ 0.80 & \underline{93.38 $\pm$ 0.66} & 3.28 $\pm$ 0.04 & 7.56 $\pm$ 0.03 & \underline{3.17 $\pm$ 0.04} & 3.79 $\pm$ 0.05 & \textbf{0.04 $\pm$ 0.01} & \underline{\textbf{0.00 $\pm$ 0.00}} & 0.10 $\pm$ 0.03 & 0.07 $\pm$ 0.01 \\
 & SPSA & 90.25 $\pm$ 0.78 & \textbf{91.02 $\pm$ 0.75} & 89.62 $\pm$ 0.81 & \underline{92.90 $\pm$ 0.68} & \underline{\textbf{2.87 $\pm$ 0.04}} & 7.88 $\pm$ 0.04 & \textbf{2.87 $\pm$ 0.04} & \textbf{3.22 $\pm$ 0.05} & 0.05 $\pm$ 0.01 & 0.11 $\pm$ 0.05 & \underline{\textbf{0.03 $\pm$ 0.00}} & \textbf{0.06 $\pm$ 0.01} \\
\cmidrule(lr){1-14}
\multirow{4}{*}{PGD} & Normal & 90.95 $\pm$ 0.76 & 89.35 $\pm$ 0.81 & 91.02 $\pm$ 0.75 & \underline{93.80 $\pm$ 0.64} & 6.86 $\pm$ 0.08 & \underline{\textbf{6.28 $\pm$ 0.06}} & 6.94 $\pm$ 0.08 & 7.24 $\pm$ 0.07 & 0.70 $\pm$ 0.21 & \underline{0.31 $\pm$ 0.13} & 0.73 $\pm$ 0.22 & 0.66 $\pm$ 0.19 \\
 & FGSM & \textbf{91.02 $\pm$ 0.75} & \textbf{90.53 $\pm$ 0.77} & \textbf{91.64 $\pm$ 0.73} & \underline{93.73 $\pm$ 0.64} & 3.04 $\pm$ 0.04 & 6.95 $\pm$ 0.05 & \underline{3.00 $\pm$ 0.04} & 3.35 $\pm$ 0.05 & \underline{\textbf{0.05 $\pm$ 0.01}} & 0.20 $\pm$ 0.10 & 0.05 $\pm$ 0.01 & 0.07 $\pm$ 0.02 \\
 & PGD & 90.11 $\pm$ 0.79 & 89.62 $\pm$ 0.81 & 89.76 $\pm$ 0.80 & \underline{\textbf{93.94 $\pm$ 0.63}} & 3.23 $\pm$ 0.04 & 7.58 $\pm$ 0.03 & \underline{3.01 $\pm$ 0.04} & 3.76 $\pm$ 0.05 & 0.05 $\pm$ 0.01 & \underline{\textbf{0.00 $\pm$ 0.00}} & 0.09 $\pm$ 0.02 & 0.07 $\pm$ 0.01 \\
 & SPSA & 89.42 $\pm$ 0.81 & 90.53 $\pm$ 0.77 & 89.83 $\pm$ 0.80 & \underline{92.62 $\pm$ 0.69} & \underline{\textbf{2.73 $\pm$ 0.04}} & 7.90 $\pm$ 0.04 & \textbf{2.80 $\pm$ 0.04} & \textbf{3.01 $\pm$ 0.04} & 0.06 $\pm$ 0.02 & 0.29 $\pm$ 0.14 & \underline{\textbf{0.04 $\pm$ 0.01}} & \textbf{0.06 $\pm$ 0.01} \\
\cmidrule(lr){1-14}
\multirow{4}{*}{SPSA} & Normal & \textbf{91.16 $\pm$ 0.75} & 88.72 $\pm$ 0.84 & \textbf{91.36 $\pm$ 0.74} & \underline{92.27 $\pm$ 0.70} & \underline{4.86 $\pm$ 0.06} & \textbf{5.95 $\pm$ 0.06} & 4.88 $\pm$ 0.06 & 5.27 $\pm$ 0.06 & 0.38 $\pm$ 0.11 & 0.65 $\pm$ 0.18 & \underline{0.37 $\pm$ 0.11} & 0.39 $\pm$ 0.11 \\
 & FGSM & 90.25 $\pm$ 0.78 & 90.81 $\pm$ 0.76 & 90.39 $\pm$ 0.78 & \underline{93.66 $\pm$ 0.64} & 2.99 $\pm$ 0.04 & 6.94 $\pm$ 0.05 & \underline{2.92 $\pm$ 0.04} & 3.35 $\pm$ 0.05 & \underline{0.06 $\pm$ 0.01} & 0.07 $\pm$ 0.03 & 0.07 $\pm$ 0.02 & 0.08 $\pm$ 0.02 \\
 & PGD & 89.42 $\pm$ 0.81 & 90.04 $\pm$ 0.79 & 89.14 $\pm$ 0.82 & \underline{\textbf{93.73 $\pm$ 0.64}} & 3.05 $\pm$ 0.04 & 7.57 $\pm$ 0.03 & \underline{2.80 $\pm$ 0.04} & 3.58 $\pm$ 0.05 & \textbf{0.03 $\pm$ 0.00} & \underline{\textbf{0.00 $\pm$ 0.00}} & 0.09 $\pm$ 0.03 & 0.06 $\pm$ 0.01 \\
 & SPSA & 90.04 $\pm$ 0.79 & \textbf{92.55 $\pm$ 0.69} & 90.18 $\pm$ 0.79 & \underline{93.38 $\pm$ 0.66} & \textbf{2.26 $\pm$ 0.04} & 7.92 $\pm$ 0.04 & \underline{\textbf{2.13 $\pm$ 0.03}} & \textbf{2.58 $\pm$ 0.04} & \underline{0.05 $\pm$ 0.01} & 0.05 $\pm$ 0.01 & \textbf{0.05 $\pm$ 0.01} & \textbf{0.05 $\pm$ 0.01} \\
\cmidrule(lr){1-14}
\multirow{4}{*}{CW} & Normal & 90.39 $\pm$ 0.78 & 89.76 $\pm$ 0.80 & 90.60 $\pm$ 0.77 & \underline{91.57 $\pm$ 0.73} & \underline{5.19 $\pm$ 0.06} & 6.30 $\pm$ 0.05 & 5.27 $\pm$ 0.06 & 5.59 $\pm$ 0.06 & 0.45 $\pm$ 0.13 & 0.46 $\pm$ 0.20 & 0.44 $\pm$ 0.13 & \underline{0.37 $\pm$ 0.11} \\
 & FGSM & \textbf{91.09 $\pm$ 0.75} & 89.00 $\pm$ 0.83 & \textbf{90.67 $\pm$ 0.77} & \underline{93.45 $\pm$ 0.65} & 3.06 $\pm$ 0.04 & 6.03 $\pm$ 0.05 & \underline{2.96 $\pm$ 0.04} & 3.46 $\pm$ 0.05 & \underline{0.05 $\pm$ 0.01} & 0.38 $\pm$ 0.10 & 0.07 $\pm$ 0.02 & 0.07 $\pm$ 0.01 \\
 & PGD & 88.93 $\pm$ 0.83 & 89.55 $\pm$ 0.81 & 89.21 $\pm$ 0.82 & \underline{93.87 $\pm$ 0.63} & 3.00 $\pm$ 0.04 & \textbf{5.24 $\pm$ 0.04} & \underline{2.76 $\pm$ 0.04} & 3.59 $\pm$ 0.05 & \underline{0.03 $\pm$ 0.00} & 0.34 $\pm$ 0.08 & 0.08 $\pm$ 0.02 & 0.06 $\pm$ 0.01 \\
 & SPSA & 89.55 $\pm$ 0.81 & \textbf{89.97 $\pm$ 0.79} & 90.11 $\pm$ 0.79 & \underline{\textbf{94.22 $\pm$ 0.62}} & \textbf{2.18 $\pm$ 0.03} & 6.32 $\pm$ 0.06 & \underline{\textbf{2.09 $\pm$ 0.03}} & \textbf{2.65 $\pm$ 0.04} & \underline{\textbf{0.02 $\pm$ 0.00}} & \textbf{0.32 $\pm$ 0.07} & \textbf{0.03 $\pm$ 0.01} & \textbf{0.06 $\pm$ 0.01} \\
\cmidrule(lr){1-14}
\multirow{4}{*}{Clean} & Normal & 89.90 $\pm$ 0.80 & \textbf{88.65 $\pm$ 0.84} & \textbf{89.97 $\pm$ 0.79} & \underline{92.20 $\pm$ 0.71} & 2.19 $\pm$ 0.03 & 5.36 $\pm$ 0.05 & \underline{2.02 $\pm$ 0.03} & 2.51 $\pm$ 0.04 & 0.05 $\pm$ 0.01 & 0.26 $\pm$ 0.07 & \underline{\textbf{0.04 $\pm$ 0.01}} & 0.05 $\pm$ 0.01 \\
 & FGSM & 89.28 $\pm$ 0.82 & 88.09 $\pm$ 0.85 & 89.00 $\pm$ 0.83 & \underline{93.25 $\pm$ 0.66} & 2.17 $\pm$ 0.04 & 5.25 $\pm$ 0.05 & \underline{1.99 $\pm$ 0.03} & 2.53 $\pm$ 0.04 & \underline{\textbf{0.03 $\pm$ 0.01}} & 0.09 $\pm$ 0.03 & 0.08 $\pm$ 0.02 & \textbf{0.05 $\pm$ 0.01} \\
 & PGD & 89.69 $\pm$ 0.80 & 88.37 $\pm$ 0.85 & 89.21 $\pm$ 0.82 & \underline{\textbf{94.01 $\pm$ 0.63}} & 2.25 $\pm$ 0.04 & \textbf{4.84 $\pm$ 0.05} & \underline{2.11 $\pm$ 0.03} & 2.72 $\pm$ 0.04 & \underline{0.05 $\pm$ 0.01} & 0.29 $\pm$ 0.08 & 0.07 $\pm$ 0.01 & 0.06 $\pm$ 0.02 \\
 & SPSA & \textbf{90.39 $\pm$ 0.78} & 88.16 $\pm$ 0.85 & 89.07 $\pm$ 0.82 & \underline{93.25 $\pm$ 0.66} & \textbf{2.12 $\pm$ 0.03} & 5.20 $\pm$ 0.05 & \underline{\textbf{1.89 $\pm$ 0.03}} & \textbf{2.36 $\pm$ 0.04} & 0.06 $\pm$ 0.01 & \textbf{0.07 $\pm$ 0.02} & \underline{0.05 $\pm$ 0.01} & 0.06 $\pm$ 0.02 \\
\cmidrule(lr){1-14}
\end{tabular}
\label{RQ1PathMNIST}
}
\end{table}
\begin{table}
\centering
\caption{\textbf{RQ2:} Mean and Standard Deviation of Coverage, Size, and SSCV for \textbf{PathMNIST}}
\resizebox{0.9\textwidth}{!}{
\begin{tabular}{ll *{4}{c} *{4}{c} *{4}{c}}
\cmidrule(lr){1-14}
\makecell{Test} & \makecell{Defensive} &
\multicolumn{4}{c}{\makecell{Coverage (\%)}} &
\multicolumn{4}{c}{Size} &
\multicolumn{4}{c}{SSCV} \\
\cmidrule(lr){3-6} \cmidrule(lr){7-10} \cmidrule(lr){11-14}
Attack & Model & APS & RSCP & VRCP-I & VRCP-C & APS & RSCP & VRCP-I & VRCP-C & APS & RSCP & VRCP-I & VRCP-C \\
\cmidrule(lr){1-14}
\multirow{4}{*}{FGSM} & Normal & \textbf{98.40 $\pm$ 0.33} & 88.86 $\pm$ 0.83 & \textbf{97.98 $\pm$ 0.37} & \underline{\textbf{99.93 $\pm$ 0.07}} & 7.68 $\pm$ 0.07 & \underline{\textbf{5.71 $\pm$ 0.07}} & 7.59 $\pm$ 0.07 & 8.95 $\pm$ 0.01 & \underline{0.47 $\pm$ 0.13} & 0.58 $\pm$ 0.18 & 0.58 $\pm$ 0.16 & 0.90 $\pm$ 0.40 \\
 & FGSM & 91.23 $\pm$ 0.75 & \underline{99.93 $\pm$ 0.07} & 88.37 $\pm$ 0.85 & 94.36 $\pm$ 0.61 & 3.12 $\pm$ 0.04 & 7.65 $\pm$ 0.03 & \underline{2.75 $\pm$ 0.04} & 3.53 $\pm$ 0.05 & \underline{\textbf{0.05 $\pm$ 0.01}} & \textbf{0.10 $\pm$ 0.00} & 0.08 $\pm$ 0.02 & 0.08 $\pm$ 0.02 \\
 & PGD & 93.87 $\pm$ 0.63 & \underline{\textbf{100.00 $\pm$ 0.00}} & 88.65 $\pm$ 0.84 & 96.59 $\pm$ 0.48 & 3.76 $\pm$ 0.05 & 7.33 $\pm$ 0.06 & \underline{3.17 $\pm$ 0.04} & 4.36 $\pm$ 0.05 & 0.07 $\pm$ 0.01 & 0.10 $\pm$ 0.00 & \underline{\textbf{0.03 $\pm$ 0.01}} & 0.08 $\pm$ 0.01 \\
 & SPSA & 89.14 $\pm$ 0.82 & \underline{100.00 $\pm$ 0.00} & 86.91 $\pm$ 0.89 & 92.27 $\pm$ 0.70 & \textbf{2.90 $\pm$ 0.04} & 8.94 $\pm$ 0.01 & \underline{\textbf{2.64 $\pm$ 0.04}} & \textbf{3.27 $\pm$ 0.05} & 0.06 $\pm$ 0.02 & 0.10 $\pm$ 0.00 & 0.07 $\pm$ 0.02 & \underline{\textbf{0.05 $\pm$ 0.01}} \\
\cmidrule(lr){1-14}
\multirow{4}{*}{PGD} & Normal & 89.55 $\pm$ 0.81 & 56.27 $\pm$ 1.31 & 88.02 $\pm$ 0.86 & \underline{\textbf{98.89 $\pm$ 0.28}} & 6.77 $\pm$ 0.08 & \underline{\textbf{4.08 $\pm$ 0.07}} & 6.56 $\pm$ 0.07 & 8.85 $\pm$ 0.02 & \underline{0.69 $\pm$ 0.21} & 0.73 $\pm$ 0.17 & 0.76 $\pm$ 0.24 & 0.79 $\pm$ 0.35 \\
 & FGSM & 91.09 $\pm$ 0.75 & \underline{\textbf{100.00 $\pm$ 0.00}} & 88.79 $\pm$ 0.83 & 93.04 $\pm$ 0.67 & 3.06 $\pm$ 0.04 & 7.63 $\pm$ 0.03 & \underline{2.74 $\pm$ 0.04} & 3.54 $\pm$ 0.05 & \underline{0.04 $\pm$ 0.00} & \textbf{0.10 $\pm$ 0.00} & 0.09 $\pm$ 0.02 & 0.07 $\pm$ 0.02 \\
 & PGD & \textbf{93.31 $\pm$ 0.66} & \underline{100.00 $\pm$ 0.00} & \textbf{88.93 $\pm$ 0.83} & 96.80 $\pm$ 0.46 & 3.73 $\pm$ 0.05 & 7.31 $\pm$ 0.06 & \underline{3.16 $\pm$ 0.04} & 4.33 $\pm$ 0.05 & 0.07 $\pm$ 0.02 & 0.10 $\pm$ 0.00 & \underline{\textbf{0.04 $\pm$ 0.01}} & 0.08 $\pm$ 0.00 \\
 & SPSA & 89.90 $\pm$ 0.80 & \underline{100.00 $\pm$ 0.00} & 87.40 $\pm$ 0.88 & 92.62 $\pm$ 0.69 & \textbf{2.77 $\pm$ 0.04} & 8.92 $\pm$ 0.01 & \underline{\textbf{2.58 $\pm$ 0.04}} & \textbf{3.14 $\pm$ 0.04} & \underline{\textbf{0.03 $\pm$ 0.01}} & 0.10 $\pm$ 0.00 & 0.09 $\pm$ 0.02 & \textbf{0.05 $\pm$ 0.02} \\
\cmidrule(lr){1-14}
\multirow{4}{*}{SPSA} & Normal & \textbf{99.03 $\pm$ 0.26} & 92.27 $\pm$ 0.70 & \textbf{99.30 $\pm$ 0.22} & \underline{\textbf{99.93 $\pm$ 0.07}} & 7.61 $\pm$ 0.07 & \underline{\textbf{5.52 $\pm$ 0.06}} & 7.55 $\pm$ 0.07 & 8.95 $\pm$ 0.01 & 0.10 $\pm$ 0.00 & 0.43 $\pm$ 0.12 & 0.19 $\pm$ 0.03 & \underline{0.10 $\pm$ 0.00} \\
 & FGSM & 90.67 $\pm$ 0.77 & \underline{99.93 $\pm$ 0.07} & 87.60 $\pm$ 0.87 & 94.50 $\pm$ 0.60 & 3.09 $\pm$ 0.04 & 7.63 $\pm$ 0.03 & \underline{2.70 $\pm$ 0.04} & 3.52 $\pm$ 0.05 & \underline{0.07 $\pm$ 0.01} & \textbf{0.10 $\pm$ 0.00} & 0.08 $\pm$ 0.02 & 0.08 $\pm$ 0.02 \\
 & PGD & 94.57 $\pm$ 0.60 & \underline{\textbf{100.00 $\pm$ 0.00}} & 90.04 $\pm$ 0.79 & 97.14 $\pm$ 0.44 & 3.71 $\pm$ 0.05 & 7.27 $\pm$ 0.06 & \underline{3.14 $\pm$ 0.04} & 4.29 $\pm$ 0.05 & 0.06 $\pm$ 0.01 & 0.10 $\pm$ 0.00 & \underline{\textbf{0.03 $\pm$ 0.00}} & 0.09 $\pm$ 0.01 \\
 & SPSA & 94.64 $\pm$ 0.59 & \underline{100.00 $\pm$ 0.00} & 91.78 $\pm$ 0.72 & 96.31 $\pm$ 0.50 & \textbf{2.70 $\pm$ 0.04} & 8.79 $\pm$ 0.02 & \underline{\textbf{2.43 $\pm$ 0.04}} & \textbf{3.06 $\pm$ 0.05} & \textbf{0.06 $\pm$ 0.02} & 0.10 $\pm$ 0.00 & \underline{0.05 $\pm$ 0.01} & \textbf{0.07 $\pm$ 0.01} \\
\cmidrule(lr){1-14}
\multirow{4}{*}{CW} & Normal & \textbf{99.51 $\pm$ 0.18} & 92.90 $\pm$ 0.68 & \textbf{99.09 $\pm$ 0.25} & \underline{\textbf{99.93 $\pm$ 0.07}} & 8.01 $\pm$ 0.06 & \underline{\textbf{5.95 $\pm$ 0.07}} & 7.91 $\pm$ 0.06 & 8.92 $\pm$ 0.02 & \underline{0.10 $\pm$ 0.00} & 0.36 $\pm$ 0.10 & 0.10 $\pm$ 0.01 & 0.10 $\pm$ 0.00 \\
 & FGSM & 90.95 $\pm$ 0.76 & \underline{\textbf{100.00 $\pm$ 0.00}} & 86.70 $\pm$ 0.90 & 92.90 $\pm$ 0.68 & 3.07 $\pm$ 0.04 & 7.07 $\pm$ 0.06 & \underline{2.69 $\pm$ 0.04} & 3.45 $\pm$ 0.05 & \underline{\textbf{0.04 $\pm$ 0.00}} & \textbf{0.10 $\pm$ 0.00} & 0.09 $\pm$ 0.03 & \textbf{0.07 $\pm$ 0.01} \\
 & PGD & 94.08 $\pm$ 0.62 & \underline{100.00 $\pm$ 0.00} & 90.18 $\pm$ 0.79 & 97.42 $\pm$ 0.42 & 3.65 $\pm$ 0.05 & 7.20 $\pm$ 0.07 & \underline{3.09 $\pm$ 0.04} & 4.27 $\pm$ 0.05 & 0.05 $\pm$ 0.01 & 0.10 $\pm$ 0.00 & \underline{\textbf{0.04 $\pm$ 0.01}} & 0.09 $\pm$ 0.01 \\
 & SPSA & 94.15 $\pm$ 0.62 & \underline{100.00 $\pm$ 0.00} & 93.66 $\pm$ 0.64 & 96.24 $\pm$ 0.50 & \textbf{2.66 $\pm$ 0.04} & 8.79 $\pm$ 0.02 & \underline{\textbf{2.46 $\pm$ 0.04}} & \textbf{3.03 $\pm$ 0.05} & 0.06 $\pm$ 0.01 & 0.10 $\pm$ 0.00 & \underline{0.05 $\pm$ 0.01} & 0.08 $\pm$ 0.01 \\
\cmidrule(lr){1-14}
\multirow{4}{*}{Clean} & Normal & \underline{\textbf{100.00 $\pm$ 0.00}} & 99.86 $\pm$ 0.10 & \textbf{100.00 $\pm$ 0.00} & \textbf{100.00 $\pm$ 0.00} & 8.05 $\pm$ 0.06 & \underline{\textbf{5.64 $\pm$ 0.06}} & 7.92 $\pm$ 0.06 & 8.90 $\pm$ 0.02 & \underline{0.10 $\pm$ 0.00} & \textbf{0.10 $\pm$ 0.00} & 0.10 $\pm$ 0.00 & 0.10 $\pm$ 0.00 \\
 & FGSM & 90.18 $\pm$ 0.79 & \underline{\textbf{100.00 $\pm$ 0.00}} & 88.30 $\pm$ 0.85 & 93.87 $\pm$ 0.63 & 2.31 $\pm$ 0.04 & 6.27 $\pm$ 0.06 & \underline{2.06 $\pm$ 0.03} & \textbf{2.63 $\pm$ 0.04} & \underline{0.04 $\pm$ 0.01} & 0.10 $\pm$ 0.00 & 0.05 $\pm$ 0.01 & 0.06 $\pm$ 0.02 \\
 & PGD & 89.62 $\pm$ 0.81 & \underline{99.93 $\pm$ 0.07} & 82.59 $\pm$ 1.00 & 93.45 $\pm$ 0.65 & \textbf{2.26 $\pm$ 0.04} & 6.02 $\pm$ 0.07 & \underline{\textbf{1.87 $\pm$ 0.03}} & 2.69 $\pm$ 0.04 & \underline{\textbf{0.03 $\pm$ 0.01}} & 0.10 $\pm$ 0.00 & 0.14 $\pm$ 0.03 & \textbf{0.05 $\pm$ 0.01} \\
 & SPSA & 92.20 $\pm$ 0.71 & \underline{100.00 $\pm$ 0.00} & 91.64 $\pm$ 0.73 & 95.61 $\pm$ 0.54 & 2.40 $\pm$ 0.04 & 8.73 $\pm$ 0.02 & \underline{2.24 $\pm$ 0.04} & 2.71 $\pm$ 0.04 & 0.04 $\pm$ 0.01 & 0.10 $\pm$ 0.00 & \underline{\textbf{0.04 $\pm$ 0.01}} & 0.08 $\pm$ 0.02 \\
\cmidrule(lr){1-14}
\label{RQ2PathMNIST}
\end{tabular}
}
\end{table}

Next, we evaluate the performance on the PathMNIST dataset, as presented in Tables \ref{RQ1PathMNIST} and \ref{RQ2PathMNIST}. For RQ1, the CP methods achieve effective coverage for every defensive model under each attack. In most cases, VRCP-I yields the smallest prediction set sizes, VRCP-C achieves the highest coverage, and APS exhibits superior SSCV. For RQ2, most CP methods require a larger \(q\) value compared to RQ1, and notably, the CP coverages under RQ2 even reach 100\%. Compared with the other CP methods, APS maintains a more stable coverage around 90\% with better SSCV. RSCP proves to be more conservative, achieving higher coverage but at the cost of larger prediction set sizes, while VRCP-C offers a balance between the two. Although VRCP-I consistently produces the smallest set sizes, its coverage is unstable for some defensive models.
\begin{table}
\centering
\caption{\textbf{RQ1:} Mean and Standard Deviation of Coverage, Size, and SSCV for \textbf{TissueMNIST}}
\resizebox{0.9\textwidth}{!}{
\begin{tabular}{ll *{4}{c} *{4}{c} *{4}{c}}
\cmidrule(lr){1-14}
\makecell{Attack} & \makecell{Defensive} &
\multicolumn{4}{c}{\makecell{Coverage (\%)}} &
\multicolumn{4}{c}{Size} &
\multicolumn{4}{c}{SSCV} \\
\cmidrule(lr){3-6} \cmidrule(lr){7-10} \cmidrule(lr){11-14}
 & Model & APS & RSCP & VRCP-I & VRCP-C & APS & RSCP & VRCP-I & VRCP-C & APS & RSCP & VRCP-I & VRCP-C \\
\cmidrule(lr){1-14}
\multirow{4}{*}{FGSM} & Normal & 90.08 $\pm$ 0.31 & 89.86 $\pm$ 0.31 & 90.06 $\pm$ 0.31 & \underline{91.78 $\pm$ 0.28} & 5.91 $\pm$ 0.00 & \underline{\textbf{5.67 $\pm$ 0.01}} & 5.87 $\pm$ 0.00 & 6.00 $\pm$ 0.00 & \textbf{0.00 $\pm$ 0.00} & 0.00 $\pm$ 0.00 & \underline{\textbf{0.00 $\pm$ 0.00}} & \textbf{0.02 $\pm$ 0.00} \\
 & FGSM & 90.09 $\pm$ 0.31 & 89.81 $\pm$ 0.31 & 89.84 $\pm$ 0.31 & \underline{\textbf{94.82 $\pm$ 0.23}} & \underline{5.42 $\pm$ 0.01} & 6.73 $\pm$ 0.00 & 6.52 $\pm$ 0.01 & 6.80 $\pm$ 0.01 & 0.20 $\pm$ 0.06 & \underline{0.00 $\pm$ 0.00} & 0.10 $\pm$ 0.03 & 0.05 $\pm$ 0.01 \\
 & PGD & \textbf{90.30 $\pm$ 0.30} & \textbf{90.05 $\pm$ 0.31} & \textbf{90.30 $\pm$ 0.30} & \underline{91.15 $\pm$ 0.29} & \textbf{5.30 $\pm$ 0.01} & 7.58 $\pm$ 0.01 & \underline{5.25 $\pm$ 0.01} & 5.53 $\pm$ 0.01 & 0.10 $\pm$ 0.03 & \underline{\textbf{0.00 $\pm$ 0.00}} & 0.03 $\pm$ 0.01 & 0.04 $\pm$ 0.01 \\
 & SPSA & 89.88 $\pm$ 0.31 & \underline{89.90 $\pm$ 0.31} & 89.53 $\pm$ 0.31 & 72.51 $\pm$ 0.46 & 6.50 $\pm$ 0.01 & 6.13 $\pm$ 0.00 & \underline{\textbf{5.19 $\pm$ 0.01}} & \textbf{5.32 $\pm$ 0.01} & 0.15 $\pm$ 0.05 & \underline{0.00 $\pm$ 0.00} & 0.23 $\pm$ 0.07 & 0.31 $\pm$ 0.07 \\
\cmidrule(lr){1-14}
\multirow{4}{*}{PGD} & Normal & 90.04 $\pm$ 0.31 & 90.08 $\pm$ 0.31 & \textbf{90.19 $\pm$ 0.31} & \underline{91.65 $\pm$ 0.28} & 5.91 $\pm$ 0.00 & \underline{\textbf{5.70 $\pm$ 0.01}} & 5.87 $\pm$ 0.00 & 6.00 $\pm$ 0.00 & \underline{\textbf{0.00 $\pm$ 0.00}} & 0.00 $\pm$ 0.00 & 0.90 $\pm$ 0.45 & \textbf{0.02 $\pm$ 0.00} \\
 & FGSM & 90.09 $\pm$ 0.31 & 89.99 $\pm$ 0.31 & 90.14 $\pm$ 0.31 & \underline{\textbf{94.80 $\pm$ 0.23}} & \underline{5.41 $\pm$ 0.01} & 6.74 $\pm$ 0.00 & 6.54 $\pm$ 0.01 & 6.82 $\pm$ 0.01 & 0.23 $\pm$ 0.07 & \underline{\textbf{0.00 $\pm$ 0.00}} & 0.10 $\pm$ 0.03 & 0.05 $\pm$ 0.02 \\
 & PGD & \textbf{90.32 $\pm$ 0.30} & \textbf{90.46 $\pm$ 0.30} & 90.16 $\pm$ 0.31 & \underline{91.30 $\pm$ 0.29} & \textbf{5.27 $\pm$ 0.01} & 7.60 $\pm$ 0.01 & \underline{5.23 $\pm$ 0.01} & 5.52 $\pm$ 0.01 & 0.06 $\pm$ 0.03 & \underline{0.00 $\pm$ 0.00} & \textbf{0.04 $\pm$ 0.02} & 0.04 $\pm$ 0.01 \\
 & SPSA & 89.83 $\pm$ 0.31 & \underline{89.84 $\pm$ 0.31} & 89.79 $\pm$ 0.31 & 72.35 $\pm$ 0.46 & 6.51 $\pm$ 0.01 & 6.12 $\pm$ 0.00 & \underline{\textbf{5.16 $\pm$ 0.01}} & \textbf{5.31 $\pm$ 0.01} & 0.30 $\pm$ 0.10 & \underline{0.00 $\pm$ 0.00} & 0.20 $\pm$ 0.06 & 0.18 $\pm$ 0.05 \\
\cmidrule(lr){1-14}
\multirow{4}{*}{SPSA} & Normal & 89.59 $\pm$ 0.31 & \textbf{90.04 $\pm$ 0.31} & \textbf{90.06 $\pm$ 0.31} & \underline{91.88 $\pm$ 0.28} & 5.90 $\pm$ 0.00 & \underline{\textbf{5.70 $\pm$ 0.01}} & 5.87 $\pm$ 0.00 & 6.00 $\pm$ 0.00 & \textbf{0.00 $\pm$ 0.00} & \underline{\textbf{0.00 $\pm$ 0.00}} & \textbf{0.00 $\pm$ 0.00} & \textbf{0.02 $\pm$ 0.00} \\
 & FGSM & 90.21 $\pm$ 0.31 & 89.92 $\pm$ 0.31 & 90.05 $\pm$ 0.31 & \underline{\textbf{94.75 $\pm$ 0.23}} & \underline{5.43 $\pm$ 0.01} & 6.73 $\pm$ 0.00 & 6.53 $\pm$ 0.01 & 6.81 $\pm$ 0.01 & 0.19 $\pm$ 0.06 & \underline{0.00 $\pm$ 0.00} & 0.40 $\pm$ 0.13 & 0.05 $\pm$ 0.02 \\
 & PGD & 90.21 $\pm$ 0.31 & 89.46 $\pm$ 0.32 & 90.06 $\pm$ 0.31 & \underline{91.26 $\pm$ 0.29} & \textbf{5.26 $\pm$ 0.01} & 7.57 $\pm$ 0.01 & \underline{5.21 $\pm$ 0.01} & 5.53 $\pm$ 0.01 & 0.07 $\pm$ 0.03 & \underline{0.01 $\pm$ 0.00} & 0.03 $\pm$ 0.01 & 0.06 $\pm$ 0.02 \\
 & SPSA & \underline{\textbf{90.66 $\pm$ 0.30}} & 89.84 $\pm$ 0.31 & 89.82 $\pm$ 0.31 & 72.35 $\pm$ 0.46 & 6.51 $\pm$ 0.01 & 6.11 $\pm$ 0.00 & \underline{\textbf{5.17 $\pm$ 0.01}} & \textbf{5.31 $\pm$ 0.01} & 0.28 $\pm$ 0.09 & \underline{0.00 $\pm$ 0.00} & 0.16 $\pm$ 0.05 & 0.18 $\pm$ 0.03 \\
\cmidrule(lr){1-14}
\multirow{4}{*}{CW} & Normal & 89.81 $\pm$ 0.31 & 89.91 $\pm$ 0.31 & 89.76 $\pm$ 0.31 & \underline{91.80 $\pm$ 0.28} & 5.71 $\pm$ 0.01 & \textbf{5.62 $\pm$ 0.01} & \underline{5.60 $\pm$ 0.01} & 5.83 $\pm$ 0.01 & \textbf{0.19 $\pm$ 0.06} & \textbf{0.10 $\pm$ 0.05} & \textbf{0.14 $\pm$ 0.04} & \underline{\textbf{0.07 $\pm$ 0.01}} \\
 & FGSM & 89.79 $\pm$ 0.31 & \textbf{89.94 $\pm$ 0.31} & \textbf{90.08 $\pm$ 0.31} & \underline{\textbf{93.87 $\pm$ 0.25}} & \underline{\textbf{5.22 $\pm$ 0.02}} & 6.34 $\pm$ 0.01 & 5.99 $\pm$ 0.02 & 6.24 $\pm$ 0.02 & 0.23 $\pm$ 0.07 & 0.90 $\pm$ 0.28 & 0.22 $\pm$ 0.07 & \underline{0.14 $\pm$ 0.04} \\
 & PGD & \textbf{90.06 $\pm$ 0.31} & 89.89 $\pm$ 0.31 & 90.03 $\pm$ 0.31 & \underline{92.76 $\pm$ 0.27} & 5.24 $\pm$ 0.01 & 7.05 $\pm$ 0.01 & \underline{5.21 $\pm$ 0.01} & 5.64 $\pm$ 0.02 & 0.35 $\pm$ 0.11 & 0.90 $\pm$ 0.28 & \underline{0.28 $\pm$ 0.09} & 0.36 $\pm$ 0.11 \\
 & SPSA & 89.72 $\pm$ 0.31 & \underline{89.87 $\pm$ 0.31} & 89.40 $\pm$ 0.32 & 71.50 $\pm$ 0.46 & 6.07 $\pm$ 0.01 & 5.82 $\pm$ 0.01 & \underline{\textbf{4.97 $\pm$ 0.01}} & \textbf{5.00 $\pm$ 0.01} & \underline{0.19 $\pm$ 0.06} & 0.90 $\pm$ 0.29 & 0.22 $\pm$ 0.07 & 0.23 $\pm$ 0.07 \\
\cmidrule(lr){1-14}
\multirow{4}{*}{Clean} & Normal & 89.86 $\pm$ 0.31 & \textbf{90.48 $\pm$ 0.30} & 90.16 $\pm$ 0.31 & \underline{\textbf{93.91 $\pm$ 0.25}} & 2.74 $\pm$ 0.01 & 4.58 $\pm$ 0.01 & \underline{2.64 $\pm$ 0.01} & 3.23 $\pm$ 0.02 & 0.03 $\pm$ 0.01 & 0.05 $\pm$ 0.01 & \underline{0.01 $\pm$ 0.00} & 0.05 $\pm$ 0.01 \\
 & FGSM & 90.13 $\pm$ 0.31 & 90.22 $\pm$ 0.31 & 90.19 $\pm$ 0.31 & \underline{93.71 $\pm$ 0.25} & 2.73 $\pm$ 0.01 & 4.58 $\pm$ 0.01 & \underline{\textbf{2.57 $\pm$ 0.01}} & \textbf{3.16 $\pm$ 0.02} & \textbf{0.03 $\pm$ 0.01} & 0.06 $\pm$ 0.02 & \underline{0.01 $\pm$ 0.00} & 0.05 $\pm$ 0.01 \\
 & PGD & 89.69 $\pm$ 0.31 & 90.18 $\pm$ 0.31 & 90.28 $\pm$ 0.30 & \underline{93.16 $\pm$ 0.26} & \textbf{2.71 $\pm$ 0.01} & 4.34 $\pm$ 0.01 & \underline{2.60 $\pm$ 0.01} & 3.17 $\pm$ 0.02 & 0.03 $\pm$ 0.01 & 0.07 $\pm$ 0.02 & \underline{\textbf{0.00 $\pm$ 0.00}} & \textbf{0.04 $\pm$ 0.01} \\
 & SPSA & \textbf{90.24 $\pm$ 0.31} & 90.12 $\pm$ 0.31 & \textbf{90.41 $\pm$ 0.30} & \underline{93.72 $\pm$ 0.25} & 2.75 $\pm$ 0.01 & \textbf{4.29 $\pm$ 0.01} & \underline{2.61 $\pm$ 0.01} & 3.19 $\pm$ 0.02 & 0.04 $\pm$ 0.01 & \textbf{0.03 $\pm$ 0.01} & \underline{0.01 $\pm$ 0.00} & 0.05 $\pm$ 0.01 \\
\cmidrule(lr){1-14}
\label{RQ1TissueMNIST}
\end{tabular}
}
\end{table}
\begin{table}
\centering
\caption{\textbf{RQ2:} Mean and Standard Deviation of Coverage, Size, and SSCV for \textbf{TissueMNIST}}
\resizebox{0.9\textwidth}{!}{
\begin{tabular}{ll *{4}{c} *{4}{c} *{4}{c}}
\cmidrule(lr){1-14}
\makecell{Test} & \makecell{Defensive} &
\multicolumn{4}{c}{\makecell{Coverage (\%)}} &
\multicolumn{4}{c}{Size} &
\multicolumn{4}{c}{SSCV} \\
\cmidrule(lr){3-6} \cmidrule(lr){7-10} \cmidrule(lr){11-14}
Attack & Model & APS & RSCP & VRCP-I & VRCP-C & APS & RSCP & VRCP-I & VRCP-C & APS & RSCP & VRCP-I & VRCP-C \\
\cmidrule(lr){1-14}
\multirow{4}{*}{FGSM} & Normal & 90.13 $\pm$ 0.31 & \underline{96.51 $\pm$ 0.19} & 79.57 $\pm$ 0.41 & 91.87 $\pm$ 0.28 & \textbf{5.92 $\pm$ 0.00} & \textbf{6.94 $\pm$ 0.00} & \underline{5.45 $\pm$ 0.01} & \textbf{6.01 $\pm$ 0.00} & \underline{\textbf{0.00 $\pm$ 0.00}} & \textbf{0.07 $\pm$ 0.00} & 0.76 $\pm$ 0.33 & \textbf{0.02 $\pm$ 0.00} \\
 & FGSM & 93.68 $\pm$ 0.25 & \underline{98.84 $\pm$ 0.11} & \textbf{92.43 $\pm$ 0.27} & 95.93 $\pm$ 0.20 & 6.49 $\pm$ 0.01 & 7.67 $\pm$ 0.01 & \underline{6.12 $\pm$ 0.01} & 7.02 $\pm$ 0.01 & 0.10 $\pm$ 0.02 & 0.10 $\pm$ 0.01 & 0.10 $\pm$ 0.02 & \underline{0.06 $\pm$ 0.01} \\
 & PGD & \textbf{95.64 $\pm$ 0.21} & \underline{\textbf{99.74 $\pm$ 0.05}} & 92.12 $\pm$ 0.28 & \textbf{97.94 $\pm$ 0.15} & 6.41 $\pm$ 0.01 & 7.83 $\pm$ 0.00 & \underline{5.77 $\pm$ 0.01} & 6.91 $\pm$ 0.01 & 0.06 $\pm$ 0.03 & 0.10 $\pm$ 0.00 & \underline{\textbf{0.05 $\pm$ 0.01}} & 0.08 $\pm$ 0.02 \\
 & SPSA & 91.00 $\pm$ 0.29 & \underline{99.57 $\pm$ 0.07} & 70.72 $\pm$ 0.47 & 84.99 $\pm$ 0.37 & 6.55 $\pm$ 0.01 & 7.71 $\pm$ 0.01 & \underline{\textbf{5.08 $\pm$ 0.01}} & 6.29 $\pm$ 0.01 & 0.28 $\pm$ 0.09 & \underline{0.10 $\pm$ 0.00} & 0.24 $\pm$ 0.04 & 0.26 $\pm$ 0.08 \\
\cmidrule(lr){1-14}
\multirow{4}{*}{PGD} & Normal & 90.12 $\pm$ 0.31 & \underline{96.24 $\pm$ 0.20} & 79.16 $\pm$ 0.42 & 91.66 $\pm$ 0.28 & \textbf{5.91 $\pm$ 0.00} & \textbf{6.93 $\pm$ 0.00} & \underline{5.44 $\pm$ 0.01} & \textbf{5.99 $\pm$ 0.00} & \underline{\textbf{0.00 $\pm$ 0.00}} & \textbf{0.06 $\pm$ 0.00} & 0.90 $\pm$ 0.40 & \textbf{0.02 $\pm$ 0.00} \\
 & FGSM & 93.80 $\pm$ 0.25 & \underline{98.91 $\pm$ 0.11} & \textbf{92.75 $\pm$ 0.27} & 95.66 $\pm$ 0.21 & 6.51 $\pm$ 0.01 & 7.67 $\pm$ 0.01 & \underline{6.13 $\pm$ 0.01} & 7.03 $\pm$ 0.01 & 0.10 $\pm$ 0.02 & 0.10 $\pm$ 0.01 & 0.07 $\pm$ 0.02 & \underline{0.06 $\pm$ 0.01} \\
 & PGD & \textbf{95.70 $\pm$ 0.21} & \underline{\textbf{99.71 $\pm$ 0.05}} & 92.12 $\pm$ 0.28 & \textbf{98.00 $\pm$ 0.14} & 6.41 $\pm$ 0.01 & 7.83 $\pm$ 0.00 & \underline{5.77 $\pm$ 0.01} & 6.91 $\pm$ 0.01 & 0.06 $\pm$ 0.01 & 0.10 $\pm$ 0.00 & \underline{\textbf{0.05 $\pm$ 0.01}} & 0.08 $\pm$ 0.03 \\
 & SPSA & 90.79 $\pm$ 0.30 & \underline{99.65 $\pm$ 0.06} & 70.98 $\pm$ 0.47 & 84.63 $\pm$ 0.37 & 6.54 $\pm$ 0.01 & 7.71 $\pm$ 0.01 & \underline{\textbf{5.08 $\pm$ 0.01}} & 6.29 $\pm$ 0.01 & 0.15 $\pm$ 0.05 & 0.10 $\pm$ 0.00 & 0.21 $\pm$ 0.04 & \underline{0.05 $\pm$ 0.01} \\
\cmidrule(lr){1-14}
\multirow{4}{*}{SPSA} & Normal & 90.13 $\pm$ 0.31 & \underline{96.43 $\pm$ 0.19} & 79.65 $\pm$ 0.41 & 92.05 $\pm$ 0.28 & \textbf{5.92 $\pm$ 0.00} & \textbf{6.94 $\pm$ 0.00} & \underline{5.45 $\pm$ 0.01} & \textbf{6.01 $\pm$ 0.00} & \underline{\textbf{0.00 $\pm$ 0.00}} & \textbf{0.06 $\pm$ 0.00} & 0.50 $\pm$ 0.20 & \textbf{0.02 $\pm$ 0.00} \\
 & FGSM & 93.78 $\pm$ 0.25 & \underline{98.68 $\pm$ 0.12} & \textbf{92.71 $\pm$ 0.27} & 95.75 $\pm$ 0.21 & 6.49 $\pm$ 0.01 & 7.67 $\pm$ 0.01 & \underline{6.11 $\pm$ 0.01} & 7.02 $\pm$ 0.01 & 0.23 $\pm$ 0.07 & 0.10 $\pm$ 0.01 & 0.10 $\pm$ 0.03 & \underline{0.06 $\pm$ 0.02} \\
 & PGD & \textbf{95.70 $\pm$ 0.21} & \underline{\textbf{99.75 $\pm$ 0.05}} & 92.27 $\pm$ 0.27 & \textbf{97.99 $\pm$ 0.14} & 6.41 $\pm$ 0.01 & 7.83 $\pm$ 0.00 & \underline{5.77 $\pm$ 0.01} & 6.91 $\pm$ 0.01 & 0.06 $\pm$ 0.01 & 0.10 $\pm$ 0.00 & \underline{\textbf{0.03 $\pm$ 0.00}} & 0.08 $\pm$ 0.03 \\
 & SPSA & 90.69 $\pm$ 0.30 & \underline{99.62 $\pm$ 0.06} & 71.03 $\pm$ 0.47 & 84.68 $\pm$ 0.37 & 6.55 $\pm$ 0.01 & 7.71 $\pm$ 0.01 & \underline{\textbf{5.07 $\pm$ 0.01}} & 6.27 $\pm$ 0.01 & 0.30 $\pm$ 0.10 & \underline{0.10 $\pm$ 0.00} & 0.26 $\pm$ 0.04 & 0.26 $\pm$ 0.08 \\
\cmidrule(lr){1-14}
\multirow{4}{*}{CW} & Normal & 89.92 $\pm$ 0.31 & \underline{96.41 $\pm$ 0.19} & 78.88 $\pm$ 0.42 & 91.75 $\pm$ 0.28 & \textbf{5.71 $\pm$ 0.01} & \textbf{6.75 $\pm$ 0.01} & \underline{5.18 $\pm$ 0.01} & 5.83 $\pm$ 0.01 & 0.21 $\pm$ 0.06 & \underline{0.10 $\pm$ 0.01} & \textbf{0.14 $\pm$ 0.03} & 0.30 $\pm$ 0.09 \\
 & FGSM & 92.52 $\pm$ 0.27 & \underline{98.69 $\pm$ 0.12} & \textbf{91.24 $\pm$ 0.29} & 94.93 $\pm$ 0.23 & 5.93 $\pm$ 0.02 & 7.16 $\pm$ 0.01 & \underline{5.57 $\pm$ 0.02} & 6.45 $\pm$ 0.02 & 0.23 $\pm$ 0.07 & \underline{\textbf{0.09 $\pm$ 0.01}} & 0.22 $\pm$ 0.07 & \textbf{0.12 $\pm$ 0.02} \\
 & PGD & \textbf{94.19 $\pm$ 0.24} & \underline{\textbf{99.58 $\pm$ 0.07}} & 90.06 $\pm$ 0.31 & \textbf{96.89 $\pm$ 0.18} & 5.89 $\pm$ 0.02 & 7.40 $\pm$ 0.01 & \underline{5.28 $\pm$ 0.01} & 6.38 $\pm$ 0.01 & 0.29 $\pm$ 0.09 & \underline{0.10 $\pm$ 0.02} & 0.43 $\pm$ 0.14 & 0.23 $\pm$ 0.06 \\
 & SPSA & 89.57 $\pm$ 0.31 & \underline{99.40 $\pm$ 0.08} & 68.99 $\pm$ 0.48 & 83.49 $\pm$ 0.38 & 6.07 $\pm$ 0.01 & 7.29 $\pm$ 0.01 & \underline{\textbf{4.68 $\pm$ 0.01}} & \textbf{5.82 $\pm$ 0.01} & \underline{\textbf{0.18 $\pm$ 0.05}} & 0.28 $\pm$ 0.07 & 0.24 $\pm$ 0.07 & 0.21 $\pm$ 0.05 \\
\cmidrule(lr){1-14}
\multirow{4}{*}{Clean} & Normal & \textbf{98.31 $\pm$ 0.13} & \underline{\textbf{99.56 $\pm$ 0.07}} & \textbf{95.25 $\pm$ 0.22} & \textbf{98.83 $\pm$ 0.11} & 4.47 $\pm$ 0.02 & 5.63 $\pm$ 0.02 & \underline{3.57 $\pm$ 0.02} & 4.76 $\pm$ 0.02 & 0.09 $\pm$ 0.00 & 0.10 $\pm$ 0.00 & \underline{0.06 $\pm$ 0.01} & 0.09 $\pm$ 0.00 \\
 & FGSM & 89.93 $\pm$ 0.31 & \underline{98.02 $\pm$ 0.14} & 86.85 $\pm$ 0.35 & 93.61 $\pm$ 0.25 & \textbf{2.70 $\pm$ 0.01} & \textbf{4.31 $\pm$ 0.02} & \underline{2.43 $\pm$ 0.01} & 3.20 $\pm$ 0.02 & \underline{0.03 $\pm$ 0.01} & \textbf{0.08 $\pm$ 0.00} & \textbf{0.06 $\pm$ 0.01} & \textbf{0.04 $\pm$ 0.00} \\
 & PGD & 89.93 $\pm$ 0.31 & \underline{98.90 $\pm$ 0.11} & 85.51 $\pm$ 0.36 & 93.59 $\pm$ 0.25 & 2.77 $\pm$ 0.01 & 4.90 $\pm$ 0.02 & \underline{\textbf{2.34 $\pm$ 0.01}} & 3.23 $\pm$ 0.02 & \underline{\textbf{0.03 $\pm$ 0.01}} & 0.09 $\pm$ 0.01 & 0.08 $\pm$ 0.02 & 0.05 $\pm$ 0.01 \\
 & SPSA & 94.46 $\pm$ 0.24 & \underline{98.87 $\pm$ 0.11} & 86.01 $\pm$ 0.36 & 93.55 $\pm$ 0.25 & 3.33 $\pm$ 0.02 & 4.78 $\pm$ 0.02 & \underline{2.34 $\pm$ 0.01} & \textbf{3.16 $\pm$ 0.02} & 0.06 $\pm$ 0.01 & 0.09 $\pm$ 0.00 & 0.07 $\pm$ 0.01 & \underline{0.04 $\pm$ 0.01} \\
\cmidrule(lr){1-14}
\label{RQ2TissueMNIST}
\end{tabular}
}
\end{table}

Finally, we evaluate the performance on the TissueMNIST dataset, as shown in Tables \ref{RQ1TissueMNIST} and \ref{RQ2TissueMNIST}. For RQ1, with the exception of VRCP-C when the SPSA defensive model faces other perturbed attacks, all CP methods achieve effective coverage for every defensive model under each attack. In most cases, VRCP-I produces the smallest prediction set sizes, VRCP-C reaches the highest coverage, and APS demonstrates superior SSCV. For RQ2, most CP methods require a larger \(q\) value compared to the RQ1 condition, and notably, the CP coverages under RQ2 can even reach 99\%. Compared with the other CP methods, APS maintains a more stable coverage around 90\% with better SSCV. RSCP is more conservative, achieving higher coverage at the cost of larger prediction set sizes, while VRCP-C strikes a balance between the two. Although VRCP-I consistently produces the smallest set sizes, its coverage becomes unstable for some defensive models—especially for the SPSA and Normal models.
\subsubsection{Results for RQ3} \label{subsec:rq3_experiment}
In the third experiment, an unknown and potentially adversarial attack is applied to the test data. The prediction set size corresponds to the attacker's payoff and the defender's cost, framing the interaction as a zero-sum game. To develop an optimal solution for the defender in a strategic manner, we preserve an additional evaluation set, which is exchangeable with both calibration and test sets. We use the prediction set size evaluated on this evaluation set as the estimated payoff matrix and derive the Nash equilibrium to determine the defender's optimal strategy. This strategy is then applied during the test phase. Due to the unstable coverage exhibited by the VRCP-based CP methods on certain defensive models or datasets in RQ1 and RQ2, we restrict our experiments in RQ3 to only the APS and RSCP methods.

In addition to the defensive models \( f_j \) and the normal model \( f_0 \), we implement a Maximum classifier \( f_{\text{max}} \) and a Minimum classifier \( f_{\text{min}} \). These classifiers aggregate the logits from multiple defensive models by taking the element-wise maximum and minimum across all models, respectively. Formally, for a given input \( x_i \) and class \( y \), they are defined as:
\begin{align*}
    f_{\text{max}}(x_i)[y] &= \max_{j} f_j(x_i)[y], \\
    f_{\text{min}}(x_i)[y] &= \min_{j} f_j(x_i)[y],
\end{align*}
where \( f_j(x_i)[y] \) denotes the predicted probability of the \( j \)-th defensive model for class \( y \) given input \( x_i \).

The payoff matrix in this scenario has a shape of \( 6 \times 5 \), accounting for the five defensive strategies (including \( f_{\text{max}} \) and \( f_{\text{min}} \)) and the five attack types. Furthermore, the payoff matrix describes the size of the prediction set corresponding to different defense methods under various attack methods. We divided the prediction set into two parts, one for evaluation and the other for testing. The evaluation payoff matrix (where the size of the prediction set is considered as the payoff object, with the attacker aiming to maximize it and the defender aiming to minimize it) is designed to calculate the optimal attack strategy and optimal defense strategy at the Nash equilibrium. To verify the effectiveness of our method, we computed the payoff matrix on both the evaluation set and the test set. Additionally, we incorporated a uniform strategy into the test set, where each defense model randomly selects and computes the size of the prediction set in equal proportions. All box plots illustrate the size of the prediction set for various defense models under the optimal attack strategy found on the evaluation set.

\begin{figure}[htbp]
    \centering
    \begin{minipage}{0.45\textwidth}
        \centering
        \includegraphics[width=\textwidth]{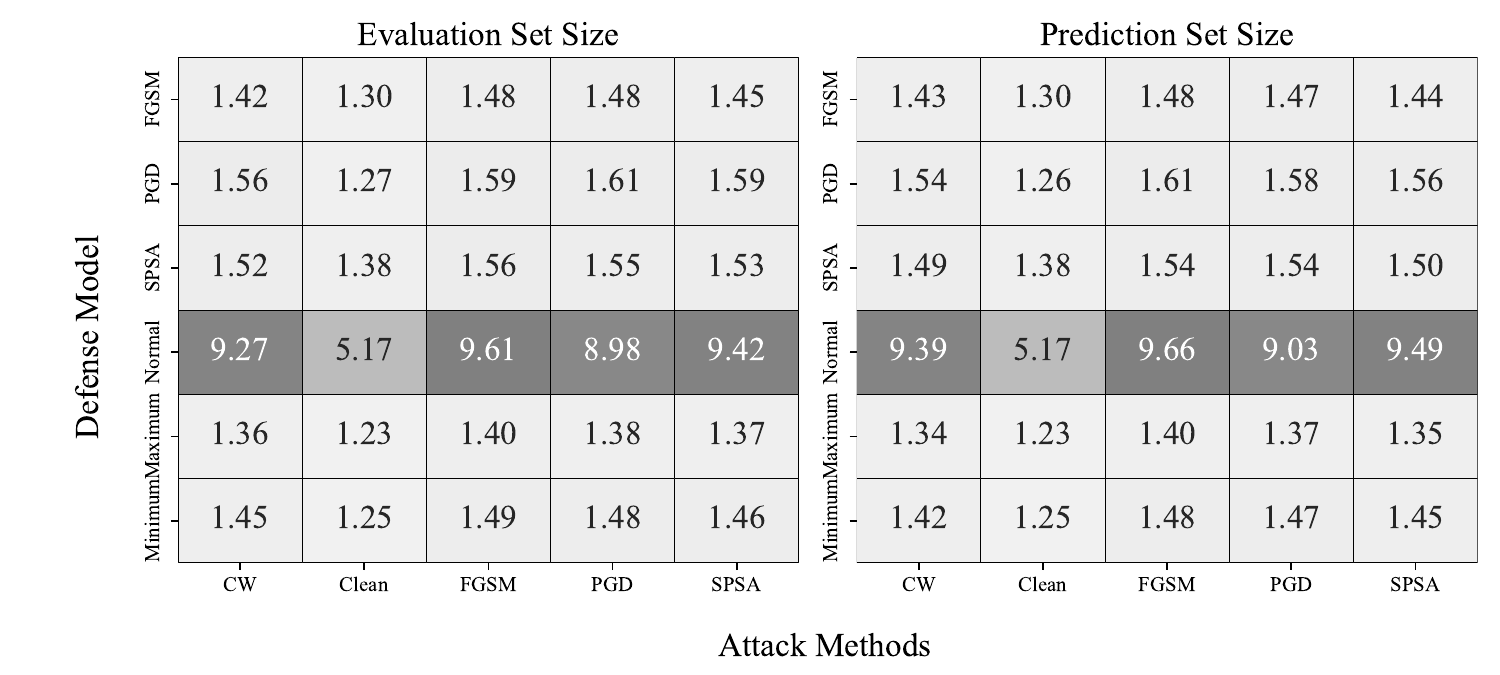}
        \caption{\textbf{RQ3:} Estimated and True Payoff Matrices for OrganAMNIST (APS)}
        \label{fig:OrganAMNIST_APS}
    \end{minipage}
    \hfill
    \begin{minipage}{0.45\textwidth}
        \centering
        \includegraphics[width=\textwidth]{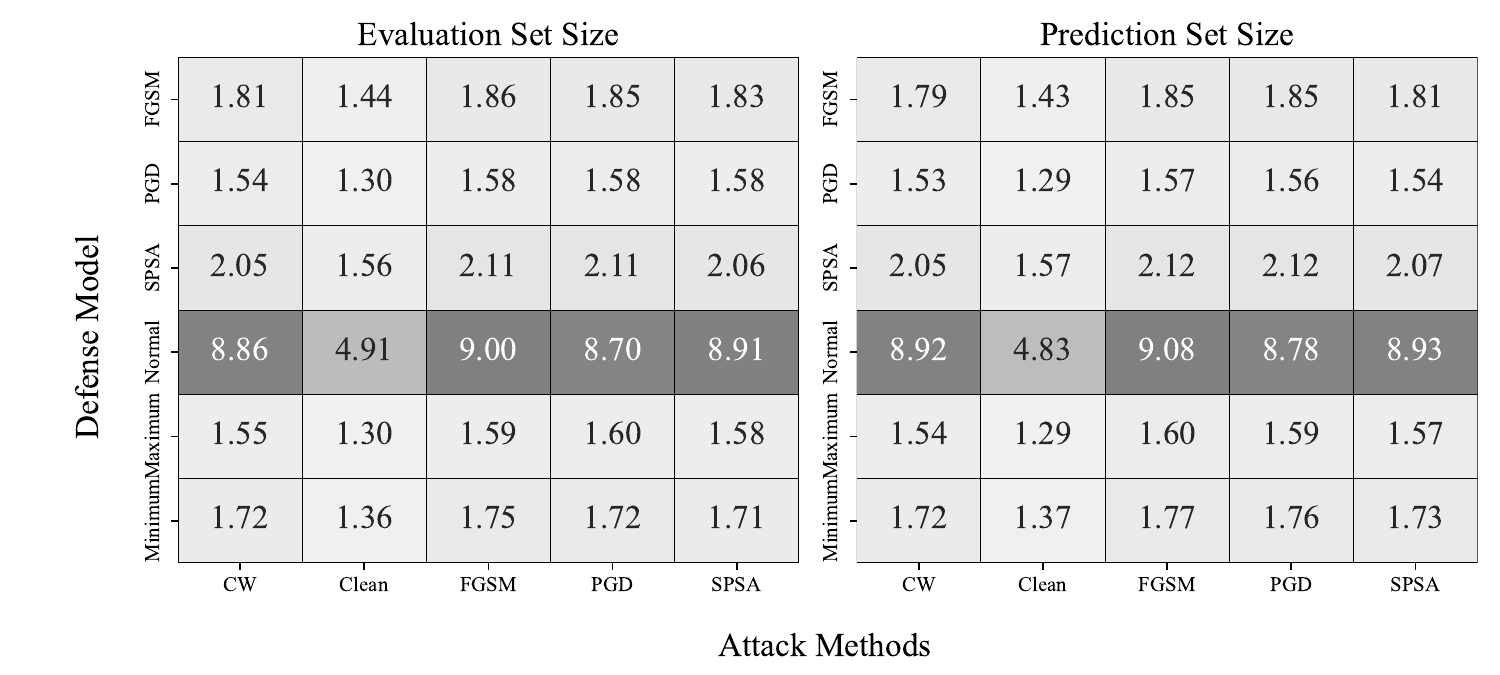}
        \caption{\textbf{RQ3:} Estimated and True Payoff Matrices for OrganAMNIST (RSCP)}
        \label{fig:OrganAMNIST_RSCP}
    \end{minipage}
\end{figure}

\begin{figure}[htbp]
    \centering
    \begin{minipage}{0.45\textwidth}
        \centering
        \includegraphics[width=\textwidth]{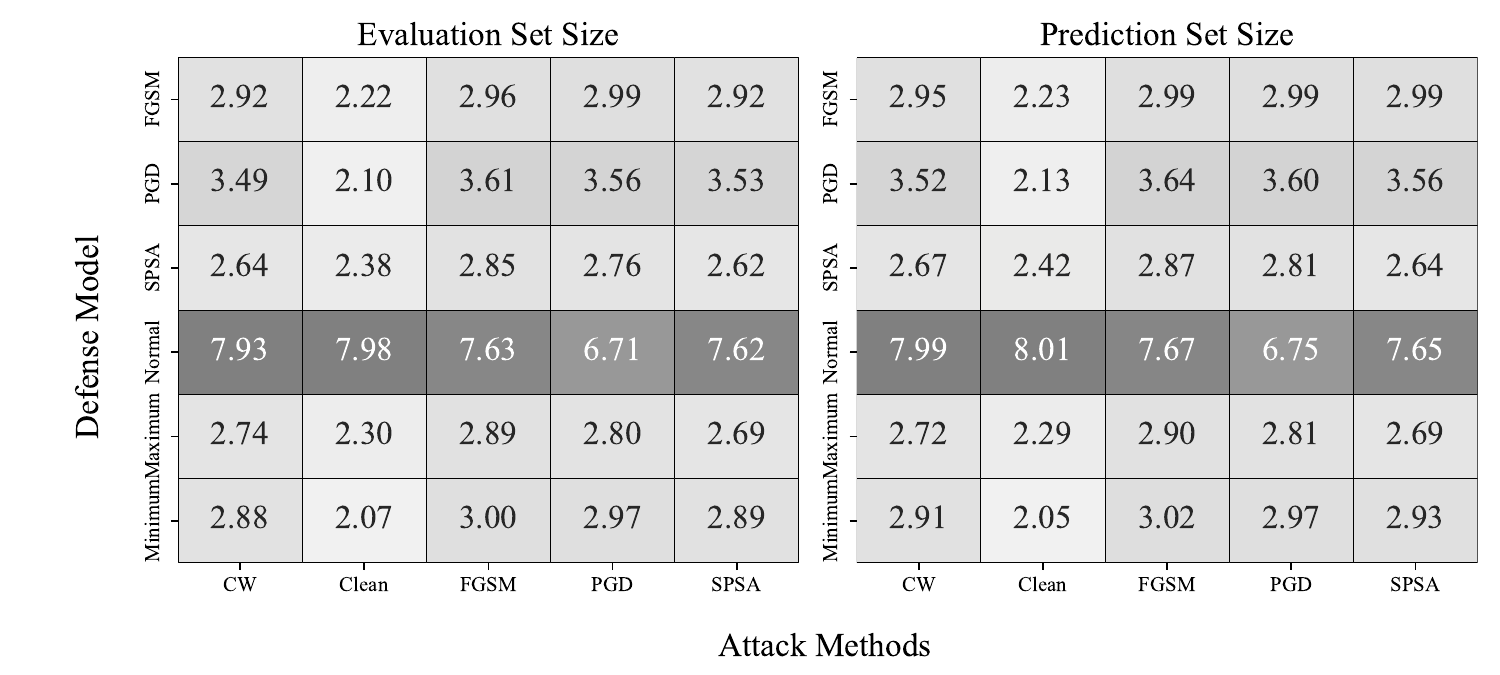}
        \caption{\textbf{RQ3:} Estimated and True Payoff Matrices for PathMNIST (APS)}
        \label{fig:PathMNIST_APS}
    \end{minipage}
    \hfill
    \begin{minipage}{0.45\textwidth}
        \centering
        \includegraphics[width=\textwidth]{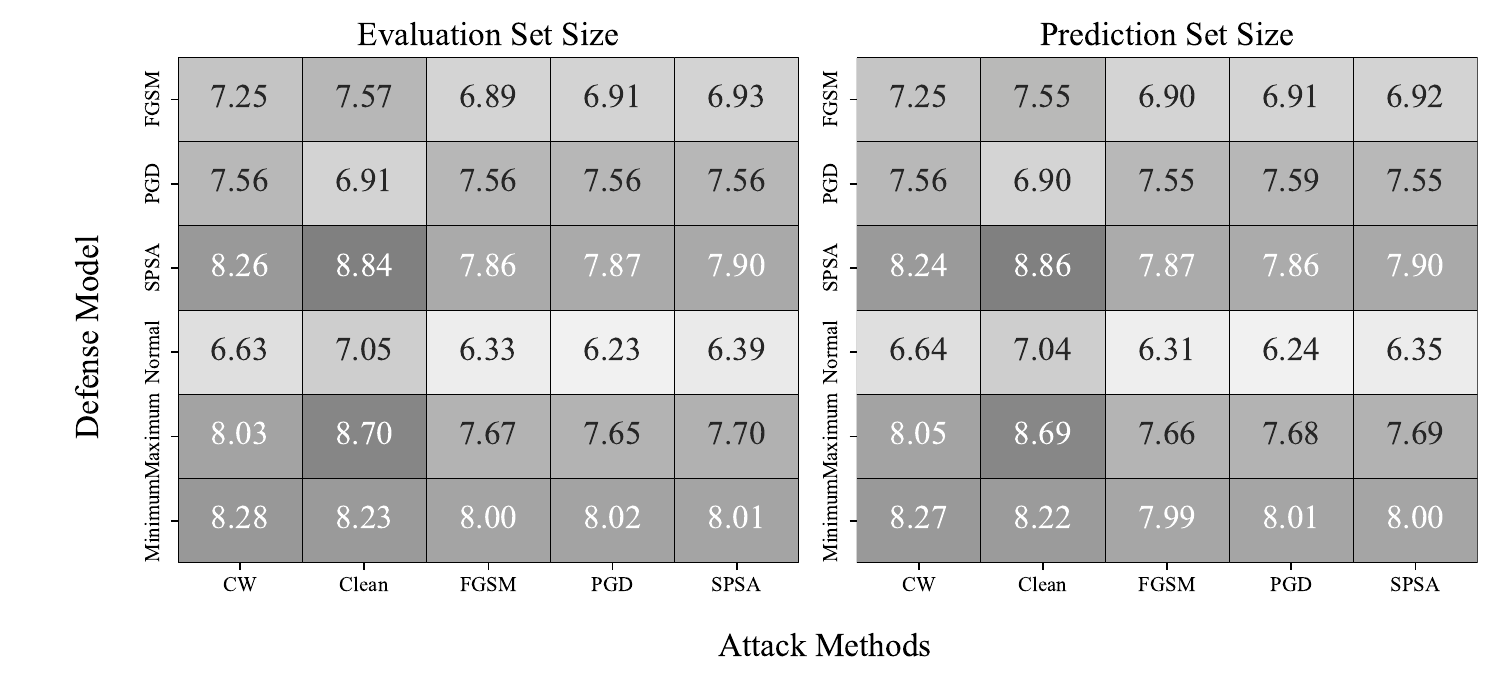}
        \caption{\textbf{RQ3:} Estimated and True Payoff Matrices for PathMNIST (RSCP)}
        \label{fig:PathMNIST_RSCP}
    \end{minipage}
\end{figure}

\begin{figure}[htbp]
    \centering
    \begin{minipage}{0.45\textwidth}
        \centering
        \includegraphics[width=\textwidth]{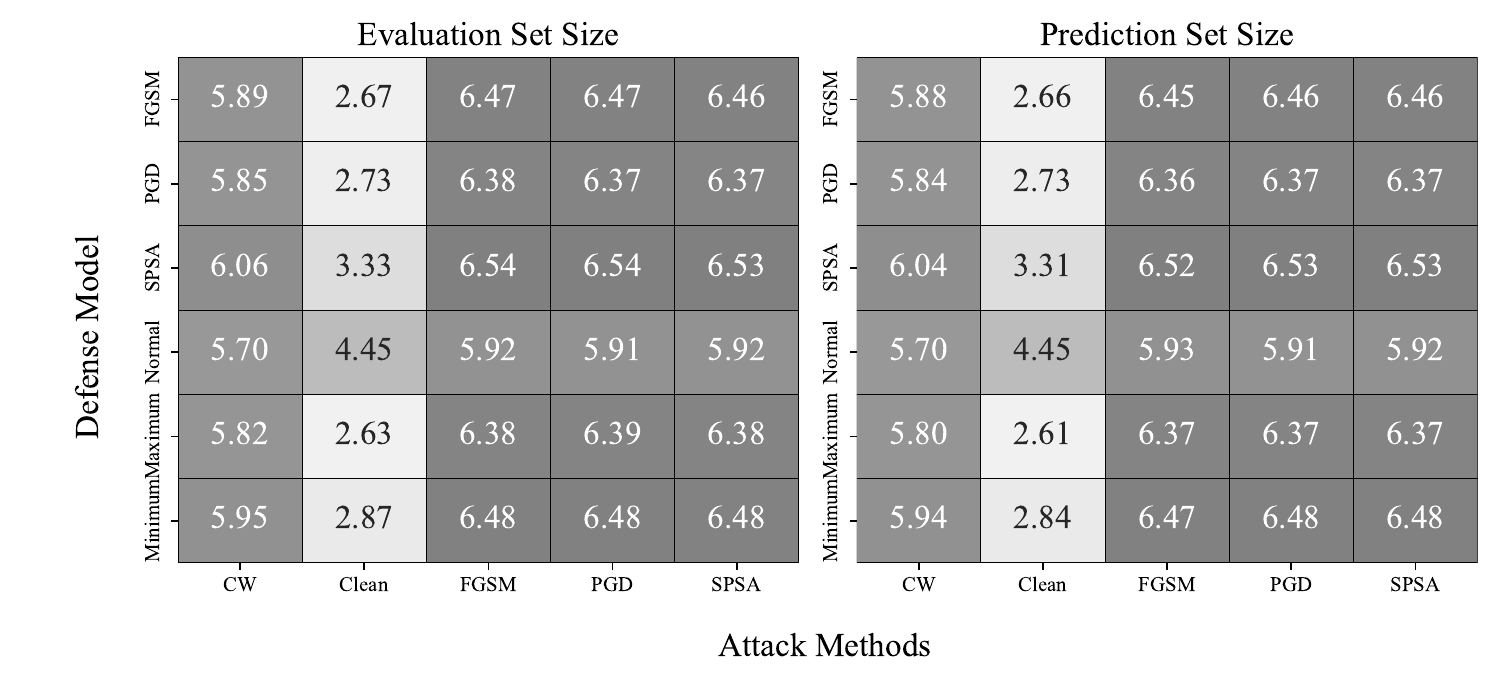}
        \caption{\textbf{RQ3:} Estimated and True Payoff Matrices for TissueMNIST (APS)}
        \label{fig:TissueMNIST_APS}
    \end{minipage}
    \hfill
    \begin{minipage}{0.45\textwidth}
        \centering
        \includegraphics[width=\textwidth]{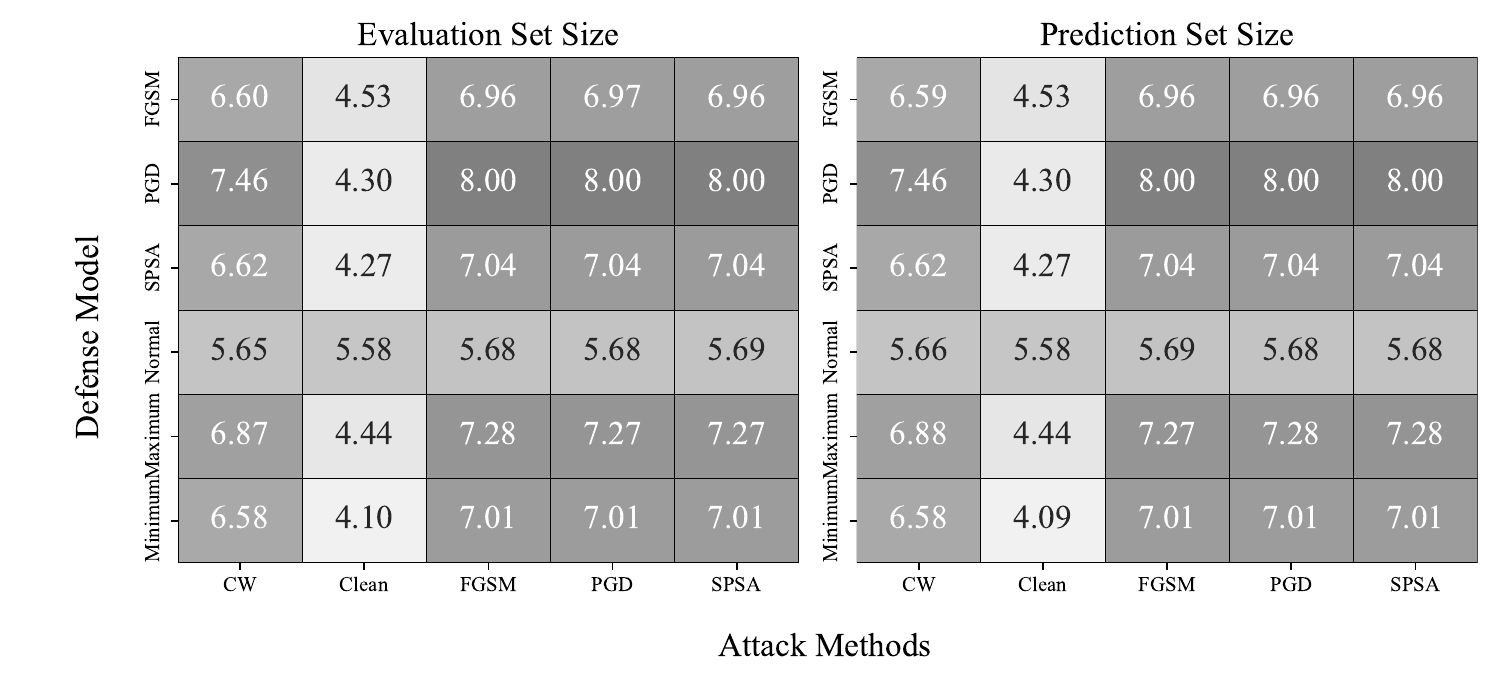}
        \caption{\textbf{RQ3:} Estimated and True Payoff Matrices for TissueMNIST (RSCP)}
        \label{fig:TissueMNIST_RSCP}
    \end{minipage}
\end{figure}

\begin{figure}[htbp]
    \centering
    \begin{subfigure}[t]{0.45\textwidth}
        \centering
        \includegraphics[width=\linewidth]{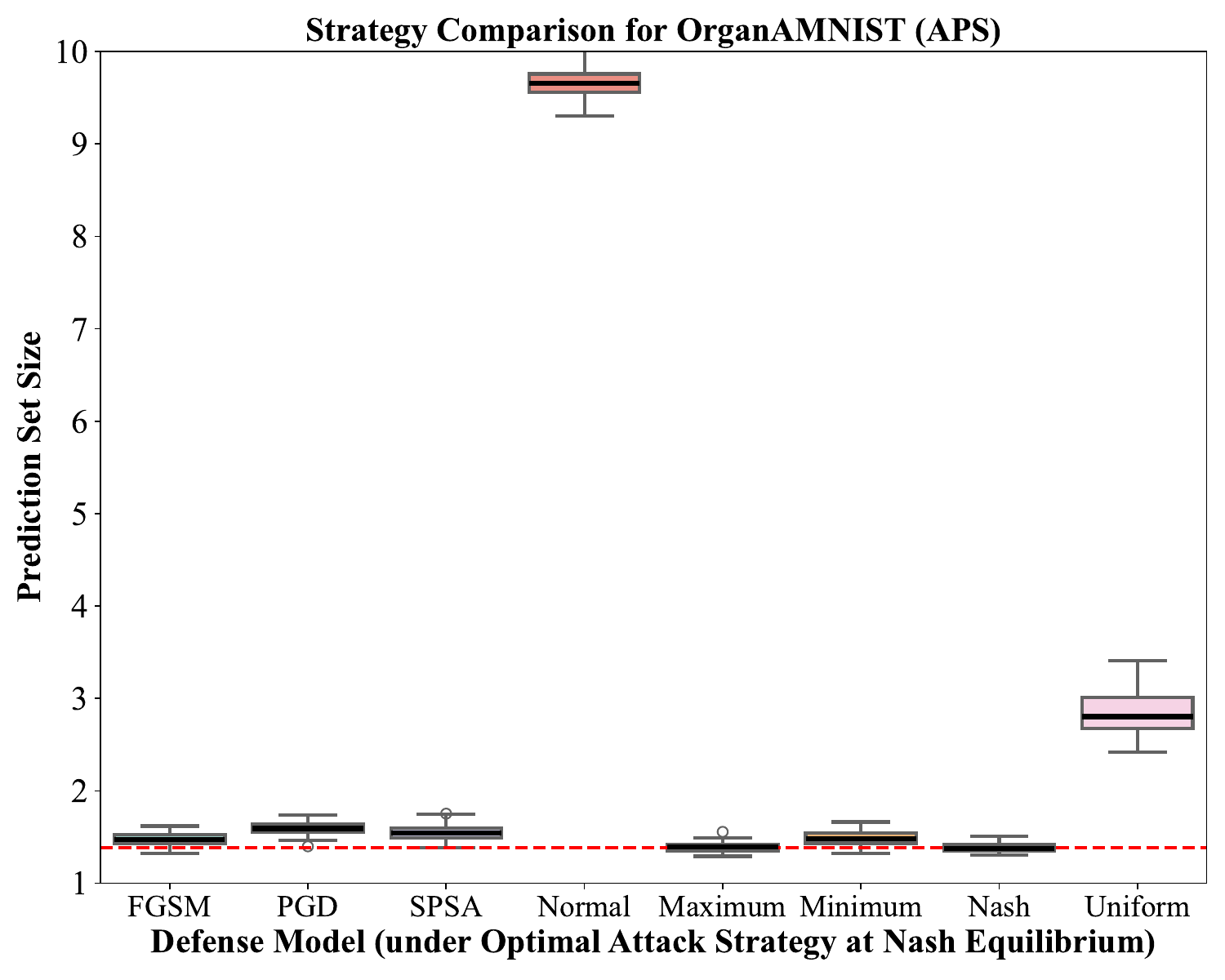}
        \label{fig:OrganAMNIST_APS_nash}
    \end{subfigure}
    \hfill
    \begin{subfigure}[t]{0.45\textwidth}
        \centering
        \includegraphics[width=\linewidth]{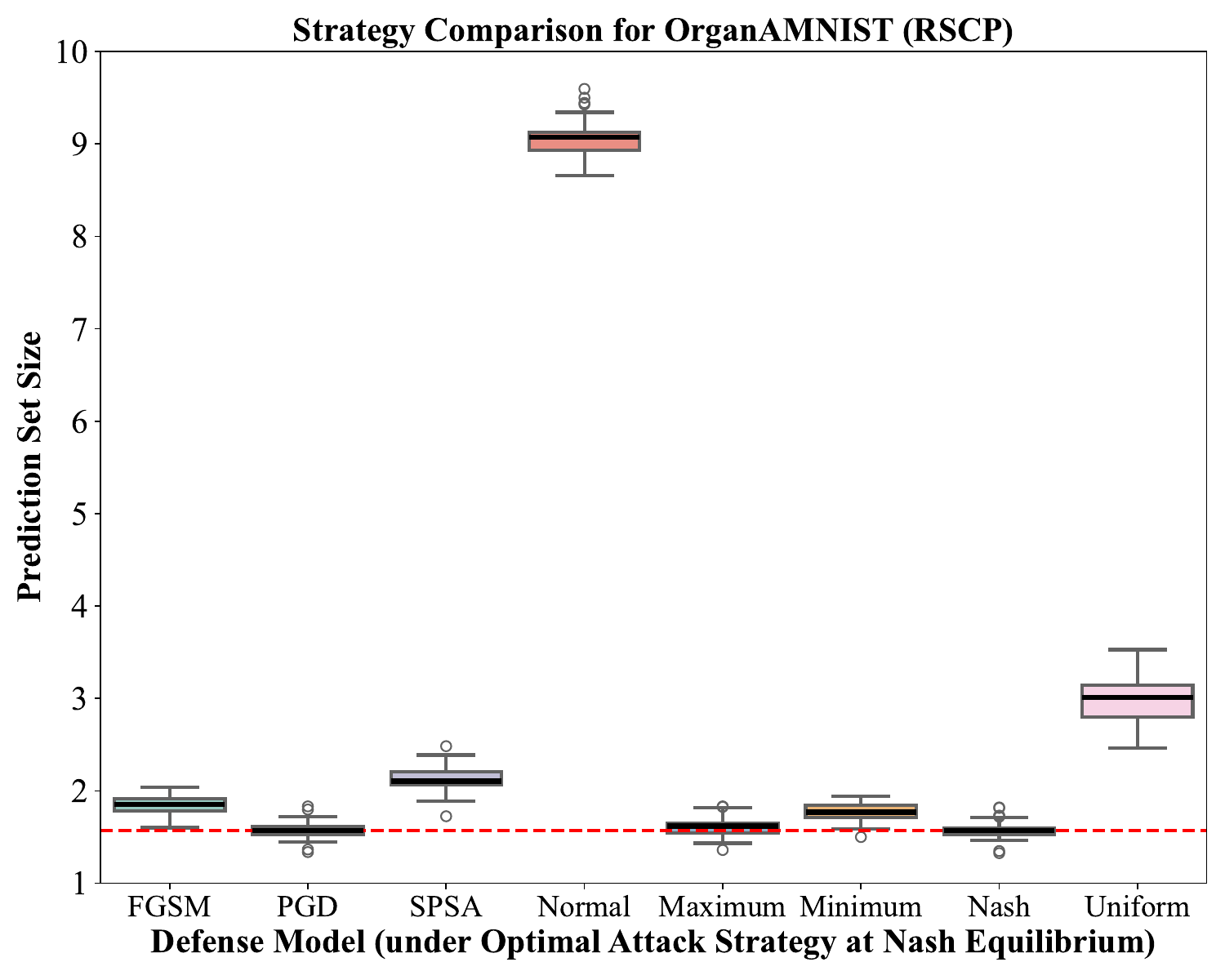}
        \label{fig:OrganAMNIST_RSCP_nash}
    \end{subfigure}

    \vspace{10pt} %

    \begin{subfigure}[t]{0.45\textwidth}
        \centering
        \includegraphics[width=\linewidth]{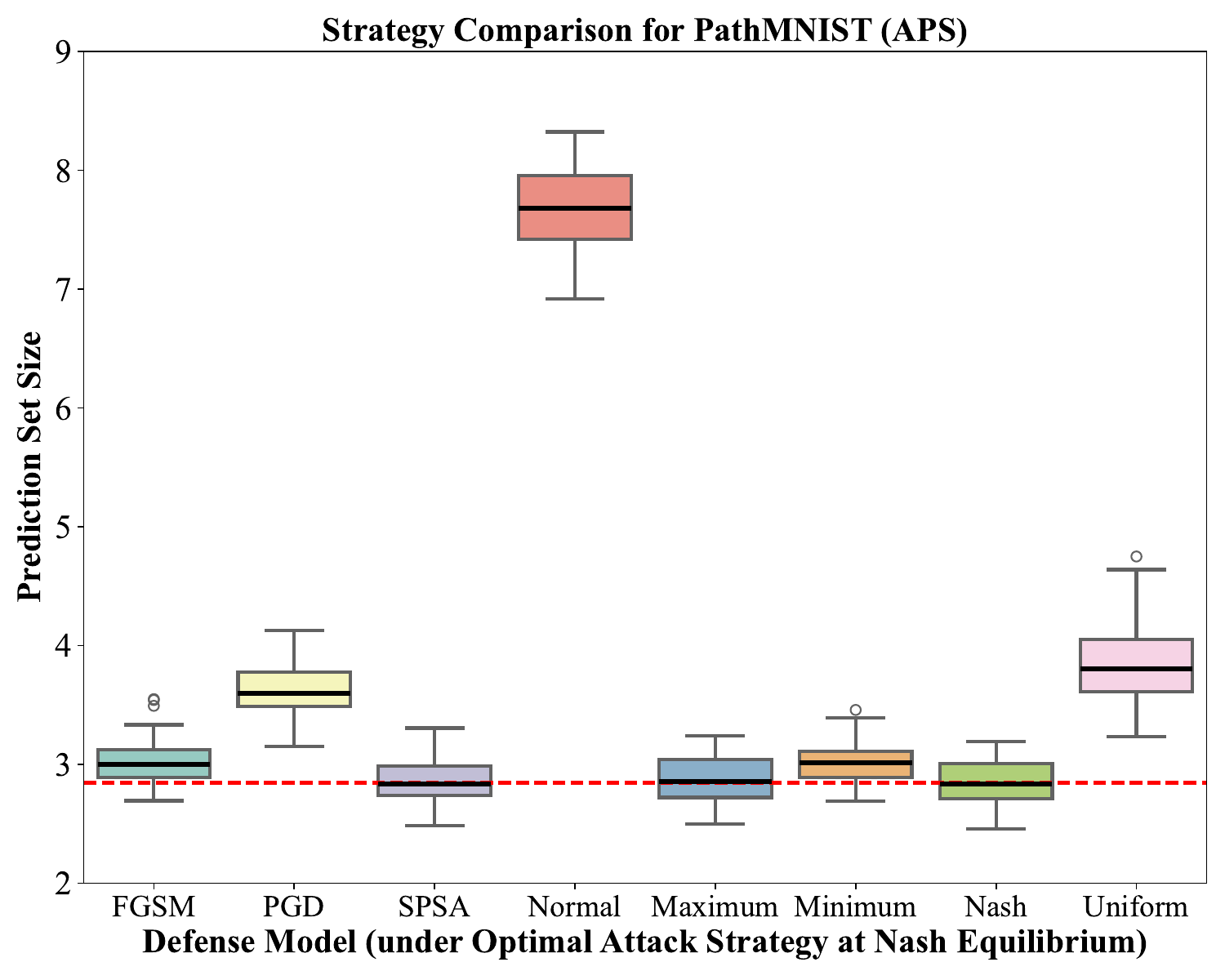}
        \label{fig:PathMNIST_APS_nash}
    \end{subfigure}
    \hfill
    \begin{subfigure}[t]{0.45\textwidth}
        \centering
        \includegraphics[width=\linewidth]{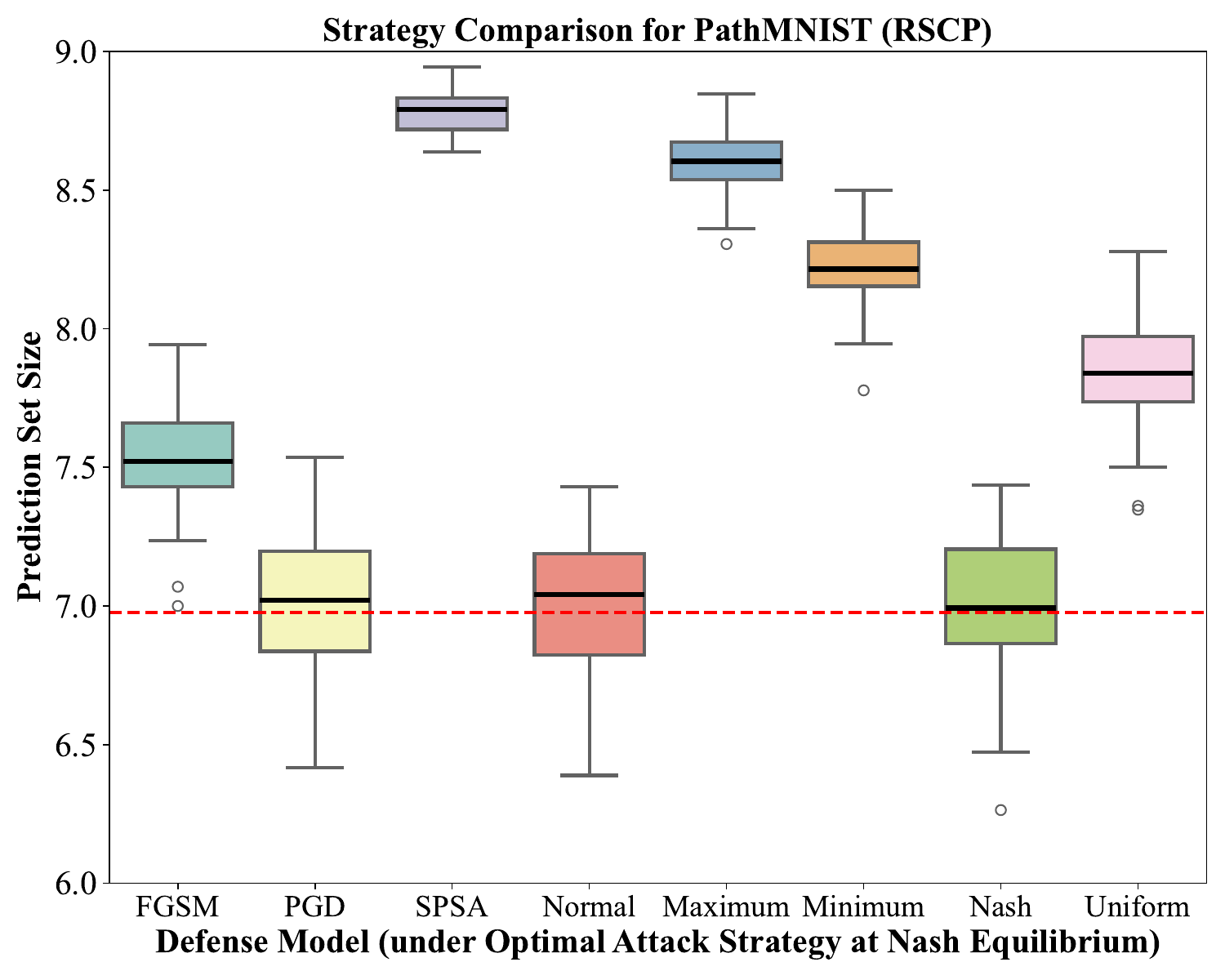}
        \label{fig:PathMNIST_RSCP_nash}
    \end{subfigure}
        \vspace{10pt} %

    \begin{subfigure}[t]{0.45\textwidth}
        \centering
        \includegraphics[width=\linewidth]{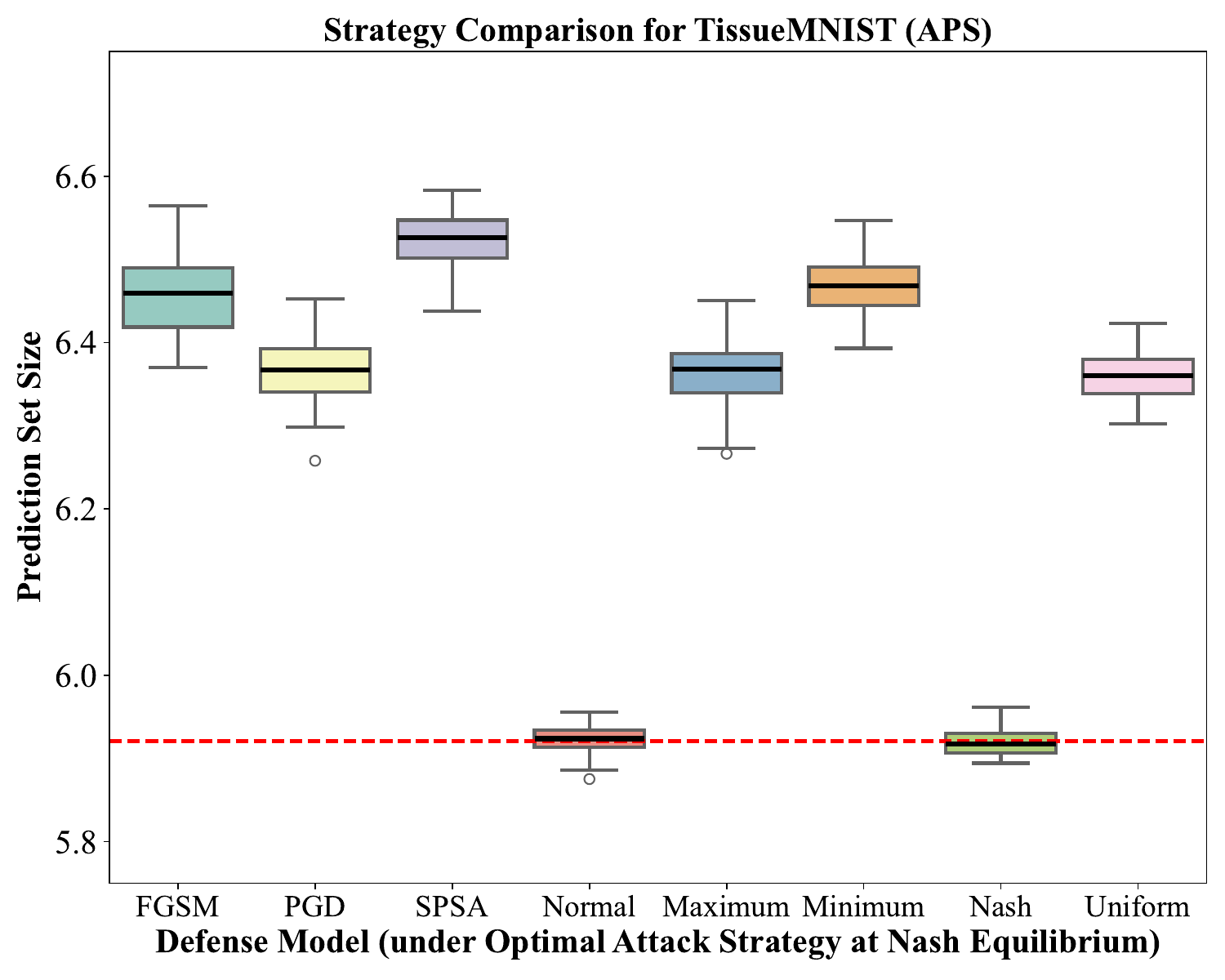}
        \label{fig:TissueMNIST_APS_nash}
    \end{subfigure}
    \hfill
    \begin{subfigure}[t]{0.45\textwidth}
        \centering
        \includegraphics[width=\linewidth]{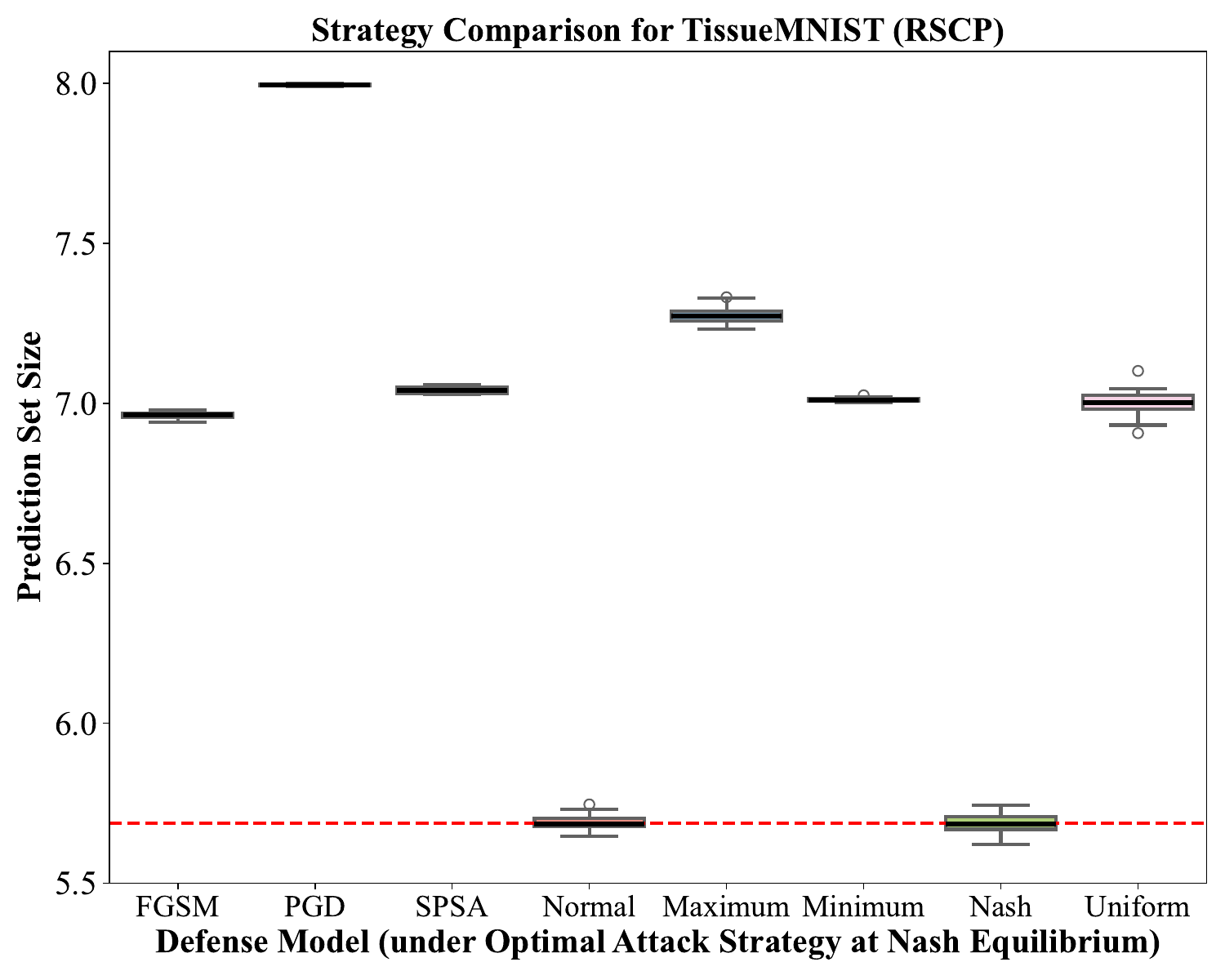}
        \label{fig:TissueMNIST_RSCP_nash}
    \end{subfigure}

    \caption{\textbf{RQ3:} Comparison of strategies across different datasets and CP methods.}
    \label{fig:StrategyComparison}
\end{figure}

From the reward matrix depicted in Figure \ref{fig:OrganAMNIST_APS}-\ref{fig:TissueMNIST_RSCP}, it can be discerned that, in comparison to the APS method, the RSCP generally yields larger prediction sets. This observation is supported by the findings of RQ1 and RQ2, which attribute this to the higher coverage achieved by RSCP through the utilization of more stable classification points (CPs). Furthermore, when applying the RSCP method on the PathMNIST dataset, the constructed Nash equilibrium scenario involves both the defender and the attacker employing multiple defense and attack models to reach a zero-sum game process. Consequently, in the box plot shown in Figure \ref{fig:StrategyComparison}, the Nash defense model exhibits a smaller prediction set compared to other methods. In contrast, for other datasets, achieving the Nash equilibrium only requires a single strategy, which explains why the Nash defense model in Figure \ref{fig:StrategyComparison} appears nearly identical to one of the defense models. Additionally, except for the PathMNIST dataset where the two CP methods exhibit some inconsistency, the performance differences among various defense models are relatively minor across other datasets.

\subsection{Discussion}
The experiments validate that adversarial training tailored to specific attack types enhances the robustness of conformal prediction sets. In scenarios with unknown attacks, leveraging a validation set to optimally combine defenses ensures maintained coverage with efficient prediction sizes. This aligns with findings from Liu et al. \cite{liu2024pitfalls}, which emphasize the importance of considering CP efficiency during adversarial training.

However, there are limitations. The premise in RQ1, which assumes that attack types are limited to known types, may not hold true in all real-world scenarios. To address RQ2 and RQ3, we have simplified strategy selection in our current work by restricting both attackers and defenders to a discrete action space. Furthermore, for RQ3, we allow the use of weighted combinations of defense models to form mixed strategies, thereby eliminating the need for exhaustive grid search for optimal weights. This design alleviates computational overhead to a certain extent, making the method more feasible for large-scale datasets or more sophisticated adversarial strategies.

Future work may explore dynamic adaptation to a broader range of attacks, extend the framework to other medical imaging datasets, and investigate the integration of uncertainty-reducing adversarial training methods \cite{colombo2020training, stutz2022learning, wang2025enhancing} to further enhance CP efficiency.

\section{Conclusion}
This study presents a robust framework for constructing conformal prediction sets on medical imaging datasets under adversarial attacks. By training specialized models for distinct attack types and employing strategic model selection and weighting, we achieve high coverage guarantees with minimal prediction set sizes. Crucially, our methodology integrates game-theoretic principles to formulate and identify optimal defensive strategies within a zero-sum game framework between the attacker and defender. This game-theoretic approach ensures that the defensive strategies are not only resilient against known and unknown adversarial perturbations but also strategically optimized to mitigate the most severe threats posed by adaptive adversaries.

In summary, the proposed methodology synergizes adversarial robustness with uncertainty quantification through a weighted combination of defensive models and the conformal prediction framework. By systematically calibrating and evaluating different weight configurations within a game-theoretic context, our approach guarantees that the final prediction sets are both reliable and efficient, maintaining high coverage rates while minimizing ambiguity in predictions. The incorporation of game-theoretic defensive strategies enhances the ability of the conformal prediction system to adaptively respond to diverse and evolving adversarial attacks, thereby positioning our methodology as a comprehensive and strategic solution for deploying deep learning models in adversarial environments. This dual focus on resilience against adversarial perturbations and the provision of meaningful uncertainty estimates significantly advances the reliability and security of AI systems in critical applications such as healthcare.

\end{document}